\documentclass[10pt,reqno]{amsart}
\usepackage{graphicx} 
\usepackage{xcolor}
\usepackage{amsmath,amsfonts,amssymb}
\usepackage{amsthm}
\usepackage{mathtools}
\usepackage{hyperref}
\usepackage[capitalise]{cleveref}
\usepackage{fullpage}
\usepackage{algorithm}
\usepackage{algpseudocode}
\usepackage{siunitx}
\usepackage{orcidlink}
\usepackage{etoolbox}

\usepackage{multirow}%
\usepackage{amsthm}%
\usepackage{mathrsfs}%
\usepackage[title]{appendix}%
\usepackage{textcomp}%
\usepackage{manyfoot}%
\usepackage{booktabs}%
\usepackage{algorithmicx}%
\usepackage{listings}%

\usepackage{subfigure}
\usepackage{graphicx}
\usepackage{epstopdf}
\ifpdf
  \DeclareGraphicsExtensions{.eps,.pdf,.png,.jpg}
\else
  \DeclareGraphicsExtensions{.eps}
\fi

\usepackage{todonotes}

\usepackage[mathlines]{lineno}

\DeclareMathOperator*{\argmin}{arg\,min}   

\newcommand{\R}{\mathbb{R}}
\newcommand{\C}{\mathbb{C}}
\newcommand{\N}{\mathbb{N}}

\renewcommand{\H}{\mathcal{H}}
\newcommand{\x}{\boldsymbol{x}}
\newcommand{\y}{\boldsymbol{y}}
\newcommand{\X}{\mathcal{X}}
\renewcommand{\u}{\boldsymbol{u}}
\newcommand{\w}{\boldsymbol{w}}

\newcommand{\D}{\mathbb{D}}
\newcommand{\bs}{\boldsymbol}
\newcommand{\mc}{\mathcal}

\newcommand{\im}{\mathrm{i}}
\newcommand{\e}{\mathrm{e}}
\renewcommand{\d}{\,\mathrm{d}} 
\newcommand{\norm}[1]{\left\lVert \smash{#1} \right\rVert}

\DeclareMathOperator\supp{supp}

\newtheorem{theorem}{Theorem}[section]
\newtheorem{lemma}[theorem]{Lemma}
\newtheorem{corollary}[theorem]{Corollary}
\newtheorem{proposition}[theorem]{Proposition}

\theoremstyle{definition}
\newtheorem{definition}[theorem]{Definition}

\numberwithin{equation}{section}

\title{A Weighted Kernel Method for Approximation that Adapts to Learned Multivariable Structure}
\author{John E. Darges\orcidlink{0009-0007-9059-8921}}
\thanks{John E. Darges: Department of Mathematics, Emory University. Email: \texttt{jdarges@emory.edu}}

\author{Laura Weidensager\orcidlink{0000-0001-7988-1485}}
\thanks{Laura Weidensager: Department of Mathematics, Simon Fraser University. Email: \texttt{laura\_weidensager@sfu.ca}}
\date{\today}

\begin{document}

\begin{abstract}
  Approximating the input-output behavior of a multivariable black-box function from limited data is challenging when blind to the importance of its inputs and their interactions. We introduce total sensitivity kernels (TSKs), a method based on families of weighted ANOVA kernels that learn and adapt to this multivariable structure. TSKs parameterize the weights on each multivariable component of the target function by factors for each input. We propose learning these factors directly from function evaluations by selecting the reproducing kernel Hilbert space (RKHS) in which the target function has minimum norm. 
Under suitable conditions, we show that this norm-minimization problem admits a unique solution, and we establish consistency of a finite-data formulation based on minimum-norm interpolation. The learned TSK factors characterize the participation of individual inputs across interactions and main effects, providing a kernel-dependent notion of input sensitivity related to total Sobol' indices. Numerical experiments demonstrate that adapting the kernel to learned multivariable structure can substantially improve approximation accuracy over a standard product kernel and ANOVA kernel.

\vspace{2ex}\noindent
\textbf{Keywords:} Adaptive kernel methods, multivariable approximation, ANOVA kernels, Weighted RKHS

\end{abstract}
\maketitle



\section{Introduction}

Consider approximating a black box function $f:\R^d\to\C$ from a limited number of function evaluations. When $f$ is high-dimensional,  accurate approximation requires many function evaluations if the approximation method does not adapt to multivariable structure. For many functions in applications, some inputs are more influential than others and only certain groups of inputs interact strongly. Effective methods must exploit, within the budget, multivariable structure not known a priori.

We describe multivariable structure through the functional analysis of variance (ANOVA) decomposition~\cite{Hoeffding48,sobol1969multidimensional,Antoniadis1984,Kuo10}. With respect to a product measure on the input domain, $f$ can be decomposed into mutually orthogonal components
\begin{equation}\label{eq:ANOVA_decomp}
f(\x) = \sum_{\u\subseteq \{1,\ldots,d\}} f_{\u}( \x_{\u}).
\end{equation}
Each term $f_{\u}(\x_{\u})$ represents the contribution of input groupings indexed by $\u\subseteq \{1,\ldots, d\}$. With $2^d$ subsets, the number of potentially relevant components grows exponentially with the input dimension.

Many functions in applications, however, have a low effective dimension~\cite{caflisch1997valuation,liu2006estimating}. Functions with low effective dimension have most of their variation explained by a few ANOVA components. 
High-dimensional approximation is tractable if we can concentrate on the most relevant portion of the ANOVA components rather than treating them all equally.

As high-dimensional functions may possess exploitable low-dimensional structure, researchers have developed methods for structured approximation. High-dimensional model representations (HDMR) construct approximations from low-dimensional component functions by assuming that high-order interactions make small contributions~\cite{Rabitz99}. Another approach estimates the effective dimension or variable importance to adapt anchored-ANOVA approximations~\cite{Choi12adaptive}. Related ANOVA-based methods include Fourier approximation~\cite{potts2022learning}, random feature expansions~\cite{elm,potts24}, and tensor train approximation~\cite{bbtensor}. These methods screen for or require tuning to important variables and interaction terms.
The statistical learning community has used sparse learning and variable selection to identify the most relevant variables and interaction terms~\cite{bach2008exploring,fang2016flexible}. Broadly, these methods impose, infer, or exploit a reduced structural representation of the target function.

Another perspective is to encode multivariable structure into the approximation space. One does this by constructing a norm that places weights on variables and interaction terms. Choice of weights imposes a geometry that aligns to a particular multivariable structure. Weighted function spaces appear in the analysis of high-dimensional integration to account for unequal importance of inputs and interactions~\cite{Sloan98}. Weighted ANOVA spaces use collections of weights $\{\gamma_{\bs u}\}_{\bs u\subseteq[d]}$ to rescale contributions of a function's multivariable components. Weights determine how strongly each component is penalized by the norm, so that the same target function has a very different norm depending on how well the weights align with its multivariable structure~\cite{EffDim,Kritzer21qmc}. Appropriate choice of weights explains tractability of quasi-Monte Carlo (QMC) integration in high dimensions~\cite{Sloan98,DickKuoSloan13,EffDim}. Structured, parameterized weights, like product weights and product-order-dependent (POD) weights, assign weight values based on a shared set of weights on input variables and interaction order~\cite{KuoSloan12qmc,DickKuoSloan13}. Parameterized weights avoid having to assign $2^d$ individual values for each weight. There is no all-encompassing method for selecting appropriate weights for QMC integration~\cite{larcher2003tractability,dick2012random}. 

In our setting, the goal is to select, using the limited function evaluations at hand, a  weighted space suitable for high-dimensional approximation.  We frame our approach in weighted reproducing kernel Hilbert spaces (RKHS), where the weights that determine the norm also determine the expression of the associated kernel function. Specifically, ANOVA kernels are our foundation. ANOVA kernels decompose into components associated with subsets of input variables and mirror the ANOVA decomposition~\eqref{eq:ANOVA_decomp}
\begin{equation}\label{equ:anova_kernel}
    \kappa_{\text{ANOVA}}(\bs x,\bs y)=\prod_{k=1}^d\Big(1 + \kappa_k(x_k,y_k)\Big)=\sum_{\bs u\subseteq \{1,\ldots,d\}}\kappa_{\bs u}(\bs x_{\bs u},\bs y_{\bs u}),
\end{equation}
see also~\cite{DuGiRoCa13}.
Standard ANOVA kernels have unweighted components, but current statistical methods learn a sparse set of weights for the components~\cite{bach2008exploring,skimfa}.
Weighted RKHSs have seen use in deterministic kernel-based approximation for partial differential equations (PDE), with a set of structured weights chosen based on analytical properties of the PDE~\cite{kaarnioja2021kernel}. We aim to learn a factors for a weighted kernel that adapts the RKHS to the multivariable structure of the target function.

We introduce total sensitivity kernels (TSKs), a family of weighted ANOVA kernels whose component weights are determined by $d$ parameters, called TSK factors, that represent the importance of each input.
\begin{equation*}
\kappa_{\bs\Sigma}(\bs x,\bs y)
=
\prod_{k=1}^d
\Big(1-\Sigma_k+\Sigma_k\kappa_k(x_k,y_k)\Big)
=
\sum_{\bs u\subseteq[d]}
\gamma_{\bs u}
\kappa_{\bs u}(\bs x_{\bs u},\bs y_{\bs u}),
\qquad
\gamma_{\bs u}
=
\prod_{k\in\bs u}\Sigma_k
\prod_{l\notin\bs u}(1-\Sigma_l).
\end{equation*}
Like product weights in QMC, TSK factors parameterize the $2^d$ ANOVA component weights. TSK factors define a parameterized family of RKHSs from which to choose the approximation space.. We choose TSK factors that determine the RKHS where the target function has minimal norm and so has geometry that aligns with the function's multivariable structure.
Under suitable conditions, we show that determining these weights is a strongly convex optimization problem and therefore admits a unique solution. 
The TSK method provides a principle for learning these parameters from available function evaluations. 
This population problem motivates a practical finite-data procedure where the norm of the unknown target is replaced by the norm of its minimum-norm interpolant. The finite-data objective is related to maximum likelihood estimation objectives that appear in Gaussian process methods such as automatic relevance determination (ARD)~\cite{williams2006gaussian}. We establish theoretical consistency of the resulting finite-data objective.  The method considered here is developed for deterministic approximation, so that function evaluations are assumed to be noise-free, and the input variables are assumed to be independent. 

\subsubsection*{Structure of this work}
We start with Section~\ref{sec:background}, where we introduce known results about kernel spaces and the ANOVA decomposition. In Section~\ref{sec:main} we define our proposed kernel, a weighted kernel, which adapts to the multivariate structure of the target function. We state properties of this kernel, especially in Theorem~\ref{thm:convexity} where we show strong convexity of our main objective which determines TSK factor optimality. Section~\ref{sec:alg} describes our main optimization algorithm in more detail and the numerical challenges that come with the algorithm. 
Finally, in Section~\ref{sec:numerics} we give numerical results, which underpin the benefits of our proposed algorithm.

\section{Background}\label{sec:background}
We start with the introduction of some notation. We use boldface for vectors and matrices. Denote the domains $\D,\Omega\subseteq\R$. Throughout, let $\bs x\in\D^d$ and $\bs\omega\in\Omega^d$ be $d$-dimensional vectors. We denote $[d] \coloneqq \{1,\ldots,d\}$ and the index $\bs u\subseteq [d]$ indicates a subset of the inputs. 
For a vector $\bs x$, the truncated vector $\bs x_{\bs u} \coloneqq (x_k)_{k\in \bs u}$ features only those inputs associated to $\bs u$. If $\bs x^i$ is the $i$th point in a data set, we denote the $k$th entry as $x_{k}^i$.

\subsection{Reproducing Kernel Hilbert Spaces}
 Consider a feature map $\Phi:\D^d\times\Omega^d\to\C$ along with a probability measure $\tau$ on $\Omega^d$. We take the feature map to be the Fourier map $\Phi(\bs x,\bs\omega)=\e^{\im\langle\bs \omega,\bs x\rangle}$ which, due to Bochner's theorem~\cite{BochnersTheorem,Ker_method}, induces a reproducing kernel 
$\kappa\colon\D^d\times\D^d\to\C$ via
\begin{equation}\label{eq:bochner}
\kappa(\x,\y)=\int_{\Omega^d}\e^{\im \langle \bs\omega, \x-\y\rangle}\,\d\tau(\bs\omega).
\end{equation}
For comprehensive theory and properties of reproducing kernels, see~\cite{AronszajnReview,CuckerSmale02,BachA}. We recall some basic properties and theories.
There is a unique Hilbert space $\mathcal{H}$ associated with the kernel $\kappa\colon \D^d\times \D^d \rightarrow \C$, known as a reproducing kernel Hilbert space (RKHS) $\H$ of functions $f\colon \D^d\rightarrow \C$, with the property that $\kappa(\x, \cdot)\in \mc H$, for all $\bs x\in\D^d$, and 
\begin{equation*}
f(\x) = \langle \kappa(\x,\cdot), f\rangle_{\mc H},
\end{equation*}
for all $\bs x\in\D^d$. The reproducing kernel $\kappa$ has the key property that, for distinct points $\bs x^1,\dots,\bs x^M\in\D^d$, the $M\times M$ matrix $\bs K$, where $\bs K_{i,j}=\kappa(\bs x^i,\bs x^j)$, is symmetric positive definite.
Consider the operator $T\colon L_2(\Omega^d,\tau) \rightarrow L_2(\D^d)$,
\begin{equation*}
(T\beta)(\x)=\int_{\Omega^d} \beta(\bs\omega)\e^{\im \langle \bs\omega, \x\rangle}\d \tau(\bs\omega)  ,\quad \beta\in L_2(\Omega^d,\tau).
\end{equation*}
The image of $T$ is the RKHS $\mathcal{H}$. The norm of $f$ in the RKHS $\mc H$ is then
\begin{equation*}
\norm{f}^2_{\mc H} = \min_{T\beta = f} \norm{\beta}^2_{L_2(\Omega^d,\tau)}.
\end{equation*}
The function $\beta$ is called the spectral factor of $f$.
In this work, we consider  kernels with a tensor product structure. These are multivariable kernels that are the Fourier transform of a probability measure which corresponds to independently distributed random variables. The probability measure takes the form $\tau=\tau_1\otimes\dots\otimes\tau_d,$ where each $\tau_k$ is a one-dimensional probability measure. The kernel can be written
\begin{equation}\label{eq:product_form}
    \kappa(\boldsymbol{x},\boldsymbol{y}) = \int_{\Omega^d}\e^{\im\langle\bs\omega,\bs x-\bs y\rangle}\d\tau(\boldsymbol{\omega})=\prod_{k=1}^d\int_{\Omega}\e^{\im\omega_k(x_k-y_k)}\,\d\tau_k(\omega_k).
\end{equation}
We often take $\tau_k$ to be continuous distributions.

Kernel learning is a subset of machine learning. It involves searching an RKHS for a function that closely approximates a target function or a set of data. An RKHS is the closure of the span of $\{\kappa(\bs x,\cdot);\,\bs x\in\X\}$.  So kernel learning means approximating by an expansion $\sum_{i=1}^M \beta_i\kappa(\bs x_i,\cdot)$.


\subsection{Kernel ridge regression and generalization bounds}
Kernel ridge regression (KRR) is a cornerstone algorithm for non-parametric function estimation. Consider a dataset $\{(\bs x^i, f^i)\}_{i=1}^M$. Here, each $\bs x_i\sim\mu$ is drawn identically and independently, while $f(\bs x^i)=f^i$.
KRR finds an estimator $\hat{f}_\lambda$ by minimizing the regularized empirical risk over the RKHS $\mc H$ \cite{ScSm01, Stein08},
\begin{equation*}
    \hat{f}_\lambda = \argmin_{f \in \mc H} \frac{1}{M} \sum_{i=1}^M \big(f(\bs x^i) - f^i\big)^2 + \lambda \norm{f}_{\mc H}^2,
\end{equation*}
where $\lambda > 0$ is the regularization parameter controlling the trade-off between data fit and generalization. \par

The theoretical performance of the estimator $\hat{f}_\lambda$ is typically evaluated via its expected generalization error in $L_2(\D^d,\mu)$. Assuming the true target function $f$ is contained within the RKHS $\mc H$, standard results from statistical learning theory and integral operator techniques \cite{CaDe07, SmZh07} provide a high-probability upper bound on the squared $L_2(\D^d,\mu)$ error,
\begin{equation}\label{eq:krr_bound}
    \norm{\hat{f}_\lambda - f}_{L_2(\mu)}^2 \leq \mathcal{O}\left( \frac{K^2 \log(1/\delta)}{\lambda M} \right) + \lambda \norm{f}_{\mc H}^2,
\end{equation}
where $K= \sup_{\bs x\in \D^d} \sqrt{\kappa(\bs x, \bs x)}$ bounds the kernel and $1-\delta$ is the confidence level. \par

The bound in~\eqref{eq:krr_bound} decomposes the error into two competing terms. The first term represents the estimation error, which vanishes as the sample size $M \to \infty$. When restricting to shift-invariant kernels, this term is independent of the choice of kernel since $K=1$ for all shift-invariant kernels. The second term represents the approximation bias introduced by the Tikhonov regularization. Balancing these two terms optimally by setting $\lambda \sim 1/\sqrt{M}$ yields an overall learning rate of $\mathcal{O}\big(\norm{f}_{\mc H} M^{-1/2}\big)$. \par

Equation~\eqref{eq:krr_bound} reveals a fundamental principle for methods using shift-invariant kernels: the tightness of the generalization bound is strictly governed by the RKHS norm of the true function, $\norm{f}_{\mc H}$. Selecting a kernel that actively minimize the objective $ \norm{f}_{\mc H}$ explicitly minimizes the leading constant of the approximation bias in~\eqref{eq:krr_bound}. 

\section{Theory of total sensitivity kernels}\label{sec:main}
In this section we propose the procedure to learn a kernel $\kappa$ that adapts to the multivariable structure of the target function. 
The kernels take a form similar to ANOVA kernels~\eqref{equ:anova_kernel}. We modify the product kernel~\eqref{eq:product_form} by introducing tunable parameters that control input importance.
\begin{definition}\label{def:tsk}
    For each  $k\in [d]$, let $\kappa_k$ be a univariate kernel. A total sensitivity kernel (TSK) is defined by
    \begin{equation}\label{eq:tsk}
        \kappa_{\bs\Sigma}(\bs x,\bs y) = \prod_{k=1}^d\big(1-\Sigma_k + \Sigma_k\kappa_k(x_k,y_k)\big).
    \end{equation}
    There, we denote the TSK factors $\bs\Sigma=\left(
   \Sigma_1,  \dots , \Sigma_d 
    \right)$, with $\Sigma_k\in(0,1).$ Let $\H_{\bs \Sigma}$ be the RKHS associated to $\kappa_{\bs \Sigma}$, with norm $\|\cdot\|_{\bs\Sigma}$.
\end{definition}
The TSK~\eqref{eq:tsk} takes an ANOVA form when we multiply out,
\begin{equation*}
    \kappa_{\bs\Sigma}(\bs x,\bs y) = \sum_{\bs u\subseteq[d]}\gamma_{\bs u}\kappa_{\bs u}(\bs x_{\bs u},\bs y_{\bs u}),\quad \kappa_{\bs u}=\prod_{k\in\bs u}\kappa_k,\, \gamma_{\u} = \prod_{k\in \bs u}\Sigma_k\prod_{l\notin u}(1-\Sigma_l).
\end{equation*}
Therefore, an input's TSK factor being larger means that interaction terms corresponding to that input are weighted more.
In this work, we let the univariate kernels be shift-invariant kernels, specifically those that come from characteristic functions,
\begin{equation*}
    \kappa_k(x_k,y_k) = \phi_k(x_k-y_k)=\int_{\Omega_k} \e^{\im\omega_k(x_k-y_k)}\,\d\tau_k(\omega_k).
\end{equation*}
Here, $\phi_k$ is the characteristic function of a distribution $\tau_k$. Then, $\kappa_{\bs u}(\bs x_{\bs u},\bs y_{\bs u}) = \prod_{k\in\bs u}\phi_k(x_k-y_k)$.

\subsection{RKHS properties}
We discuss the properties of the TSKs and characterize the corresponding RKHS. As a corollary of \cite[Part I.6]{AronszajnReview}, we have the following lemma for the norm in a sum of RKHSs.
\begin{lemma}\label{lem:norm_sum}
    Let $\kappa_{{\bs  \Sigma}}$ be a TSK. For $f\in\mc{H}_{\bs\Sigma}$, with corresponding RKHS norm $\|\cdot\|_{\bs  \Sigma}$, then
    \begin{equation}\label{equ:ker_sum}
    \|f\|_{{{\bs  \Sigma}}}^2=\min_{f=\sum_{\bs u} f_{\bs u}} \sum_{\bs u\subseteq [d]}\frac{1}{\gamma_{\bs u}}\|f_{\bs u}\|_{\mc{H}_{\bs u}}^2, \quad f_{\bs u}\in\mc H_{\bs u}.
    \end{equation}
    Each RKHS $\mc H_{\bs u}$ corresponds to the kernel $\kappa_{{\bs u}}=\prod_{k\in\bs u}\kappa_k$ with attached norm $\|\cdot\|_{\mc H_{\bs u}}$.
\end{lemma}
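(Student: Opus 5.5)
The plan is to apply Aronszajn's theorem on sums of reproducing kernels twice: once for a positive rescaling of a kernel, and once for a finite sum of kernels. We start from the expansion of the TSK obtained by multiplying out \eqref{eq:tsk},
\[
\kappa_{\bs\Sigma}=\sum_{\bs u\subseteq[d]}\gamma_{\bs u}\kappa_{\bs u},
\]
which is valid because each factor $1-\Sigma_k+\Sigma_k\kappa_k$ is a convex combination of the constant kernel $1$ and $\kappa_k$. Since $\Sigma_k\in(0,1)$, every weight $\gamma_{\bs u}$ is strictly positive. Each $\kappa_{\bs u}=\prod_{k\in\bs u}\kappa_k$ is a reproducing kernel, being a product of positive definite kernels (Schur product). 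We adopt the convention $\kappa_\emptyset\equiv 1$, so that $\mc H_\emptyset$ is the space of constants.

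The first step is the scaling fact: for $c>0$, the RKHS of $c\,\kappa_{\bs u}$ is $\mc H_{\bs u}$ as a set, with norm $\|g\|^2=c^{-1}\|g\|_{\mc H_{\bs u}}^2$. To check this, take the candidate inner product $c^{-1}\langle\cdot,\cdot\rangle_{\mc H_{\bs u}}$. Then $c\,\kappa_{\bs u}(\bs x,\cdot)\in\mc H_{\bs u}$, and
\[
c^{-1}\langle g,\,c\,\kappa_{\bs u}(\bs x,\cdot)\rangle_{\mc H_{\bs u}}=g(\bs x),
\]
so the reproducing property holds. Uniqueness of the RKHS then identifies the space. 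Applying this with $c=\gamma_{\bs u}$ gives that the RKHS of $\gamma_{\bs u}\kappa_{\bs u}$ carries the norm $\gamma_{\bs u}^{-1}\|\cdot\|_{\mc H_{\bs u}}^2$.

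The second step invokes \cite[Part I.6]{AronszajnReview}. It states that if $\kappa=\kappa^{(1)}+\kappa^{(2)}$ with RKHSs $\mc H^{(1)},\mc H^{(2)}$, then $\mc H_\kappa=\mc H^{(1)}+\mc H^{(2)}$ and
\[
\|f\|_\kappa^2=\min\bigl\{\|f_1\|_1^2+\|f_2\|_2^2:\ f=f_1+f_2\bigr\}.
\]
The minimum is attained because it is the squared norm of a projection onto a closed subspace of $\mc H^{(1)}\oplus\mc H^{(2)}$. We extend this to the $2^d$ summands by induction on the number of terms. Alternatively, we apply it once directly: use the direct-sum Hilbert space $\bigoplus_{\bs u}\mc H_{\bs u}$ with the weighted norms, and the summation map $(f_{\bs u})\mapsto\sum_{\bs u} f_{\bs u}$. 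Its kernel is closed, since point evaluations are continuous on each summand. The quotient norm is then exactly the right-hand side of \eqref{equ:ker_sum}, and the minimum is attained at the orthogonal projection of any preimage onto the orthogonal complement of the kernel of the summation map.

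The main point needing care is the induction step, which must not lose attainment of the minimum. The nested minimization
\[
\min_{f=g+f_{\bs v}}\Bigl(\|g\|^2_{\text{partial sum}}+\gamma_{\bs v}^{-1}\|f_{\bs v}\|^2_{\mc H_{\bs v}}\Bigr)
\]
has the partial-sum norm itself defined as a minimum over decompositions of $g$. We resolve this by collapsing the two minima into a single minimum over all decompositions $f=\sum_{\bs u}f_{\bs u}$. Attainment at the final stage then follows from the projection argument. Note that the statement does not require the components $f_{\bs u}$ to be unique, since the spaces $\mc H_{\bs u}$ may intersect (for example, all of them contain constants when $\phi_k$ has an atom at $0$), so no uniqueness claim needs to be proved.
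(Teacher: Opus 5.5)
Your proposal is correct and follows essentially the same route as the paper. Both expand $\kappa_{\bs\Sigma}=\sum_{\bs u}\gamma_{\bs u}\kappa_{\bs u}$, apply Aronszajn's sum-of-kernels theorem, and identify the RKHS of $\gamma_{\bs u}\kappa_{\bs u}$ with $\mc H_{\bs u}$ under the norm $\gamma_{\bs u}^{-1}\|\cdot\|_{\mc H_{\bs u}}^2$. Your version is tighter in three places the paper glosses over: you verify the rescaling via the reproducing property and uniqueness of the RKHS rather than only on finite kernel expansions (which tacitly needs a density step), and you make explicit the extension from two to $2^d$ summands and the attainment of the minimum.
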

\begin{proof}
    Each $\gamma_{\bs u}\kappa_{\bs u}$ is a kernel with an associated RKHS, which we denote $\mc H_{\gamma,\bs u}$ with corresponding norm $\|\cdot\|_{\gamma,\bs u}$. According to~\cite[Part I.6]{AronszajnReview} about the sum of reproducing kernels, the norm of $f\in\mc H_{\bs \Sigma}$ is 
    \begin{equation*}
    \|f\|_{{\bs \Sigma}}^2=\min_{f=\sum_{\bs u} f_{\bs u}} \sum_{\bs u\subseteq [d]}\|f_{\bs u}\|^2_{\mc H_{\gamma,\bs u}}, \quad f_{\bs u}\in\mc H_{\gamma,\bs u}.
    \end{equation*}
    Yet, note that the class of functions comprising $\mc H_{\bs u}$ is the same as that of $\mc H_{\gamma,\bs u}$. If $f\in\mc H_{\gamma,\bs u}$, then we can express it as
    \begin{equation*}
    f = \sum_{i=1}^M\alpha_i(\gamma_{\bs u}\kappa_{\bs u}(\bs x^i,\cdot)).
    \end{equation*}
    It is clear that $f=\sum_{i=1}^M\beta_i\kappa_{\bs u}(\bs x^i,\cdot)$, with $\beta_i=\gamma_{\bs u}\alpha_i$ so that $f\in\mc H_{\bs u}$. The norm of $f$ in $\mc H_{\gamma,\bs u}$ is 
    \begin{equation*}
    \|f\|^2_{\mc H_{\gamma,\bs u}}=\sum_{i,\ell=1}^M\alpha_i\alpha_\ell(\gamma_{\bs u}\kappa_{\bs u}(\x^i,\x^{\ell})) = \frac{1}{\gamma_{\bs u}} \sum_{i,\ell=1}^M\beta_i\beta_\ell\kappa_{\bs u}(\x^i,\x^\ell) = \frac{1}{\gamma_{\bs u}}\|f\|^2_{\mc H_{\bs u}}.
    \end{equation*}
    Summing all terms gives
    \begin{equation*}
        \min_{f=\sum_{\bs u} f_{\bs u}} \sum_{\bs u\subseteq [d]}\|f_{\bs u}\|^2_{\mc H_{\gamma,\bs u}}=\min_{f=\sum_{\bs u} f_{\bs u}} \sum_{\bs u\subseteq [d]}\frac{1}{\gamma_{\bs u}}\|f_{\bs u}\|^2_{\mc H_{\bs u}}.
    \end{equation*}
    This finishes the proof.
\end{proof}
The lemma shows an immediate connection to the ANOVA decomposition. Due to the orthogonality of the terms in the ANOVA decomposition~\eqref{eq:ANOVA_decomp}, there we have $\|f\|^2=\sum_{\bs u\subseteq[d]}\|f_{\bs u}\|^2$. So, the RKHS norm $\| f\|_{\bs\Sigma}$ corresponding to the TSK $\kappa_{\bs \Sigma}$ resembles a weighted sum of ANOVA energies in the RKHS. 

In the discussion of kernels of the form~\eqref{eq:bochner}, we see every probability measure induces a kernel. TSKs are induced by a probability measure. This gives an alternative formulation of the kernel,
\begin{equation*}
    \kappa_{\bs \Sigma}(\bs x,\bs y) = \int_{\Omega^d} \e^{\im\langle\bs\omega,\bs x-\bs y\rangle}\,\d\tau_{\bs\Sigma}(\bs\omega),\quad  \tau_{{\bs  \Sigma}} = \sum_{\bs u\subseteq [d]}\big(\bigotimes_{k\in \bs u}\Sigma_k\tau_k\bigotimes_{k\notin \bs u}(1-\Sigma_k)\delta_0                 \big), 
\end{equation*} 
where $\delta_0$ is the Dirac measure. This shows that a TSK is induced by a spike-and-slab distribution. Hence a TSK is a shift-invariant kernel as long as the 1D kernels $\kappa_k$ are shift-invariant. We can interpret the TSK factor $\Sigma_k$ as the probability of ``activation" for the input $x_k$. 

The TSK factors parameterize a family of probability measures, kernel functions, and RKHSs. Choosing TSK factors means choosing which RKHS to approximate $f$ in. What is the relationship between the RKHSs corresponding to two different sets of TSK factors?
In fact, the RKHSs describe the same sets of functions, no matter how the TSK factors $\bs \Sigma$ are chosen. Changing TSK factors only changes the RKHS inner product. The following proposition states this in more detail.
\begin{proposition}\label{prop:rnderiv}
    Consider a family of TSKs parameterized by $\bs\Sigma$.
     Let ${\bs  \Sigma}\in(0,1)^d$ and $\widetilde{{\bs  \Sigma}}\in[0,1]^d$. Then any $f\in\mathcal{H}_{\widetilde{{\bs  \Sigma}}}$ is also a member of $\mathcal{H}_{{\bs  \Sigma}}$.
\end{proposition}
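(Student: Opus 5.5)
The plan is to argue through the decomposition norm of Lemma~\ref{lem:norm_sum} rather than through explicit kernel expansions. Write $\gamma_{\bs u}=\prod_{k\in\bs u}\Sigma_k\prod_{l\notin\bs u}(1-\Sigma_l)$ and $\widetilde\gamma_{\bs u}$ for the analogous weights built from $\widetilde{\bs\Sigma}$. Since $\bs\Sigma\in(0,1)^d$, every $\gamma_{\bs u}$ is strictly positive. The entries of $\widetilde{\bs\Sigma}$ may equal $0$ or $1$, so some $\widetilde\gamma_{\bs u}$ can vanish. In that case $\kappa_{\widetilde{\bs\Sigma}}=\sum_{\bs u}\widetilde\gamma_{\bs u}\kappa_{\bs u}$ is a sum over the index set $S=\{\bs u:\widetilde\gamma_{\bs u}>0\}$ only.

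First I apply the Aronszajn sum-of-kernels result (as in the proof of Lemma~\ref{lem:norm_sum}) to $\kappa_{\widetilde{\bs\Sigma}}$, restricted to the subsets in $S$. This shows that any $f\in\mathcal H_{\widetilde{\bs\Sigma}}$ admits a decomposition $f=\sum_{\bs u\in S} f_{\bs u}$ with $f_{\bs u}\in\mathcal H_{\bs u}$ and
\begin{equation*}
\|f\|_{\widetilde{\bs\Sigma}}^2=\sum_{\bs u\in S}\frac{1}{\widetilde\gamma_{\bs u}}\|f_{\bs u}\|_{\mathcal H_{\bs u}}^2 .
\end{equation*}
Here I reuse the rescaling observation from that proof: $\mathcal H_{\gamma,\bs u}$ and $\mathcal H_{\bs u}$ contain the same functions, with norms differing by the factor $1/\gamma_{\bs u}$. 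The minimum in Lemma~\ref{lem:norm_sum} is attained, so such a decomposition exists.

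Next I extend this decomposition to all $\bs u\subseteq[d]$ by setting $f_{\bs u}=0$ for $\bs u\notin S$. It is then an admissible decomposition for the minimization in Lemma~\ref{lem:norm_sum} with weights $\gamma_{\bs u}$. Because $\gamma_{\bs u}>0$ for every $\bs u$, each term $\gamma_{\bs u}^{-1}\|f_{\bs u}\|_{\mathcal H_{\bs u}}^2$ is finite, and so
\begin{equation*}
\|f\|_{\bs\Sigma}^2\le\sum_{\bs u\in S}\frac{1}{\gamma_{\bs u}}\|f_{\bs u}\|_{\mathcal H_{\bs u}}^2\le \max_{\bs u\in S}\frac{\widetilde\gamma_{\bs u}}{\gamma_{\bs u}}\,\|f\|_{\widetilde{\bs\Sigma}}^2<\infty .
\end{equation*}
To pass from finiteness of this infimum to membership $f\in\mathcal H_{\bs\Sigma}$, I use the Aronszajn characterization of the sum space directly. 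The space $\mathcal H_{\bs\Sigma}$ is exactly the set of functions $\sum_{\bs u}g_{\bs u}$ with $g_{\bs u}\in\mathcal H_{\gamma,\bs u}=\mathcal H_{\bs u}$, and our $f$ has this form.

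The main obstacle is the boundary case $\widetilde{\bs\Sigma}\in[0,1]^d$. Lemma~\ref{lem:norm_sum} is stated only for factors in $(0,1)$, where all weights are positive. So I need to check that the Aronszajn argument still applies when the kernel is a sum over a proper subfamily $S$. I would handle this by simply dropping the zero-weight terms, so that $\kappa_{\widetilde{\bs\Sigma}}$ is a genuine finite sum of kernels $\widetilde\gamma_{\bs u}\kappa_{\bs u}$ with positive coefficients, to which Aronszajn applies verbatim. As a byproduct, the argument gives the norm comparison $\|f\|_{\bs\Sigma}\le C\|f\|_{\widetilde{\bs\Sigma}}$ with $C^2=\max_{\bs u\in S}\widetilde\gamma_{\bs u}/\gamma_{\bs u}$.
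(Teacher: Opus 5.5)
Your argument is correct, but it runs on the kernel side, whereas the paper works on the spectral side.

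\textbf{The paper's route.} It writes $\tau_{\bs\Sigma}$ as the spike-and-slab mixture and takes the Radon--Nikodym derivative $\mathrm{d}\tau_{\widetilde{\bs\Sigma}}/\mathrm{d}\tau_{\bs\Sigma}$. It identifies this derivative as the piecewise constant function equal to $\widetilde\gamma_{\bs u}/\gamma_{\bs u}$ on the sets $\Omega_{\bs u}$ of frequency vectors with support $\bs u$. It then transfers a spectral factor via $\beta=\widetilde\beta\,\mathrm{d}\tau_{\widetilde{\bs\Sigma}}/\mathrm{d}\tau_{\bs\Sigma}$, which gives $\|\beta\|^2_{L_2(\tau_{\bs\Sigma})}\le m\|\widetilde\beta\|^2_{L_2(\tau_{\widetilde{\bs\Sigma}})}$ with $m=\max_{\bs u}\widetilde\gamma_{\bs u}/\gamma_{\bs u}$. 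The boundary case is absorbed by absolute continuity, $\tau_{\widetilde{\bs\Sigma}}\ll\tau_{\bs\Sigma}$.

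\textbf{Your route.} You apply Aronszajn's sum theorem to both $\kappa_{\widetilde{\bs\Sigma}}=\sum_{\bs u\in S}\widetilde\gamma_{\bs u}\kappa_{\bs u}$ and $\kappa_{\bs\Sigma}=\sum_{\bs u}\gamma_{\bs u}\kappa_{\bs u}$. The containment is then just $\sum_{\bs u\in S}\mathcal H_{\bs u}\subseteq\sum_{\bs u\subseteq[d]}\mathcal H_{\bs u}$. Your constant $C^2$ coincides with the paper's $m$, since the ratios vanish off $S$. Dropping the zero-weight terms is the kernel-side counterpart of the paper's absolute-continuity observation.

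\textbf{What each buys.} Your route never uses the Bochner representation, so it holds for arbitrary positive semidefinite univariate kernels. It also avoids the explicit form of the Radon--Nikodym derivative, which tacitly needs $\tau_k(\{0\})=0$. Without that condition the mixture components are not separated by the sets $\Omega_{\bs u}$, and the derivative is no longer $\widetilde\gamma_{\bs u}/\gamma_{\bs u}$ piecewise. The paper's route, in exchange, ties the result to the feature-map picture and to the reading of $\Sigma_k$ as an activation probability.

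\textbf{One ordering point.} Lemma~\ref{lem:norm_sum} presupposes $f\in\mathcal H_{\bs\Sigma}$. So state the set-level containment from Aronszajn's theorem first, and only then invoke the minimum to get $\|f\|_{\bs\Sigma}^2\le C^2\|f\|_{\widetilde{\bs\Sigma}}^2$.
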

\begin{proof}
    For each $f\in\mathcal{H}_{\widetilde{{\bs  \Sigma}}}$, there exists a spectral factor $\widetilde{\beta}$ in $L_2(\tau_{\widetilde{{\bs  \Sigma}}})$ such that
    \begin{equation*}
f(\bs x) = \int \widetilde{\beta}(\bs\omega)\e^{\im \bs\omega^\top\bs x}\d\tau_{\widetilde{{\bs  \Sigma}}}(\bs\omega) 
= \sum_{\bs u\subseteq[d]}\widetilde{\gamma}_{\bs u}\int \widetilde{\beta}_{\bs u}^0\e^{\im\bs\omega_{\bs u}^\top\bs x_{\bs u}}\d\tilde \tau_{\bs u}(\bs\omega_{\bs u}),
\end{equation*}
where $\tilde{\tau}_{\bs u} \coloneqq \bigotimes_{k\in\bs u}\tau_{\tilde \Sigma_k}$ and $\widetilde{\gamma}_{\bs u}=\prod_{k\in\bs u}\tilde\Sigma_k\prod_{l\notin\bs u}(1-\tilde\Sigma_l)$.
Now, define the function
\begin{equation*}
\frac{\d\tau_{\widetilde{{\bs  \Sigma}}}}{\d\tau_{{\bs  \Sigma}}}(\bs\omega)= \left\{\begin{array}{ccc}
\frac{\widetilde{\gamma}_{\emptyset}}{\gamma_{\emptyset}},\quad \bs\omega\in\Omega_{\emptyset} \\
\frac{\widetilde{\gamma}_{\{1\}}}{\gamma_{\{1\}}},\quad \bs\omega\in\Omega_{\{1\}} \\
\vdots \\
\frac{\widetilde{\gamma}_{\bs u}}{\gamma_{\bs u}},\quad \bs\omega\in\Omega_{\bs u} \\
\vdots \\
\frac{\widetilde{\gamma}_{[d]}}{\gamma_{[d]}},\quad \bs\omega\in\Omega_{[d]} \end{array}  \right..
\end{equation*}
This function is piecewise constant on the $2^d$ different sets $\Omega_{\bs u}\subset\R^d$, with $\Omega_{\bs u}=\{\w\in\R^d\colon \supp \w = \u\}$. For example, $\Omega_\emptyset = \{0\}$, $\Omega_{\{1\}}=\{\bs\omega\in\R^d\colon \omega_k=0,\ k\neq 1\}$, and $\Omega_{[d]}=\{\bs\omega\in\R^d\colon\omega_k\neq 0\}$.  The function $\frac{d\tau_{\widetilde{{\bs  \Sigma}}}}{d\tau_{{\bs  \Sigma}}}(\bs\omega)$ is the Radon-Nikodym derivative. This exists when $\tau_{\widetilde{{\bs  \Sigma}}}$ is absolutely continuous with respect to $\tau_{\bs \Sigma}$, which is the case whenever ${\bs  \Sigma}\in(0,1)^d$. With these we can write
    \begin{align*}
    f(\bs x)  =& \sum_{\bs u\subseteq[d]}\widetilde{\gamma}_{\bs u}\int_{\Omega^{|\bs u|}} \widetilde{\beta}_{\bs u}^0\e^{\im\bs\omega_{\bs u}^\top\bs x_{\bs u}}\d\tilde\tau_{\bs u}(\bs\omega_{\bs u}) 
    = \sum_{\bs u\subseteq[d]}\gamma_{\bs u}\int_{\Omega^{|\bs u|}} \frac{\widetilde{\gamma}_{\bs u}}{\gamma_{\bs u}}\widetilde{\beta}_{\bs u}^0\e^{\im\bs\omega_{\bs u}^\top\bs x_{\bs u}}\d\tilde \tau_{\bs u}(\bs\omega_{\bs u}) \\
    = & \int \widetilde{\beta}(\bs\omega)\e^{\im\bs\omega^\top\bs\x}\frac{d\tau_{\widetilde{{\bs  \Sigma}}}}{d\tau_{{\bs  \Sigma}}}(\bs\omega)\d\tau_{\bs u}(\bs\omega_{\bs u}).
    \end{align*}
    This shows that $f$ can be expressed through the new spectral factor $\beta = \widetilde{\beta}\frac{\d\tau_{\widetilde{{\bs  \Sigma}}}}{\d\tau_{{\bs  \Sigma}}}$ in $\mathcal{H}_{{\bs  \Sigma}}$. To ensure $f\in \mathcal{H}_{{\bs  \Sigma}}$, we verify that $\beta \in L_2(\Omega,\tau_{{\bs  \Sigma}})$. Let $m \coloneqq \max_{\bs u\subseteq[d]} \frac{\widetilde{\gamma}_{\bs u}}{\gamma_{\bs u}} < \infty$. Then,
    \begin{equation*}
    \|\beta\|_{L_2(\tau_{{\bs  \Sigma}})}^2 = \int |\widetilde{\beta}|^2 \left(\frac{d\tau_{\widetilde{{\bs  \Sigma}}}}{d\tau_{{\bs  \Sigma}}}\right)^2 d\tau_{{\bs  \Sigma}} \le m \int |\widetilde{\beta}|^2 \frac{d\tau_{\widetilde{{\bs  \Sigma}}}}{d\tau_{{\bs  \Sigma}}} d\tau_{{\bs  \Sigma}} = m\|\widetilde{\beta}\|_{L_2(\tau_{\widetilde{{\bs  \Sigma}}})}^2 < \infty.
    \end{equation*}
    Since $\beta$ has a finite RKHS norm under $\tau_{{\bs  \Sigma}}$, we conclude $f\in \mathcal{H}_{{\bs  \Sigma}}$.
\end{proof}
Choosing which RKHS we approximate $f$ in, across all TSK factors in $(0,1)^d$, is then a well-posed question. It is impossible for the target to be in one RKHS but not another.

\subsection{Optimal TSK factors}
We now must answer what makes TSK factors optimal. We want the structure of $\kappa_{\bs\Sigma}$ to match the multivariable structure of $f$. Recall that the corresponding RKHS norm in~\cref{lem:norm_sum} weighs contributions by different groupings of inputs differently. Since each RKHS includes the same set of functions, different weights align the geometry of the space with a specific multivariable structure. A small value of the weight $\gamma_{\bs u}$ heavily penalizes a large $\bs u$-component of $f$ by inflating the RKHS norm. Conversely, a large value of $\gamma_{\bs u}$ does not penalize a large $\bs u$-component. So, the multivariable structure of $f$ aligns most closely with the RKHS where the norm is smallest. Identifying the RKHS where $f$ has the minimal norm also lowers one of the terms in the upper bound on the KRR error in~\eqref{eq:krr_bound}.

We therefore optimize the TSK factors ${\bs  \Sigma} = (\Sigma_1,\ldots,\Sigma_d)$ in~\eqref{eq:tsk} by minimizing the RKHS norm,
\begin{equation*}
g(\bs\Sigma) \coloneqq \norm{f}^2_{{\bs  \Sigma}}.
\end{equation*}
The following result shows that for a fixed function $f$ the norm $g({\bs  \Sigma})$ as a function depending on the parameters $\bs \Sigma = (\Sigma_1,\ldots,\Sigma_d)$ is not only convex, but also strongly convex, which means that the spectrum of the Hessian is bounded from below by a constant bigger than zero. Strongly convex functions have the benefit that they have a unique global minimum, if it exists.

\begin{theorem}\label{thm:convexity}
Let $ 0\neq f\in \mc H_{\bs\Sigma}$ for some, and hence every,
$\bs\Sigma\in(0,1)^d$. Assume that each kernel $\kappa_k$ is a strictly positive definite kernel satisfying
$\kappa_k(x_k,x_k)=1.$
Then the function
\[
    g(\bs\Sigma)=\|f\|_{\bs\Sigma}^2
\]
is strongly convex on $(0,1)^d$.
\end{theorem}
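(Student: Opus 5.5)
The plan is to reduce $g$ to an explicit closed form in $\bs\Sigma$ and then verify strong convexity term by term. By \cref{lem:norm_sum},
\[
g(\bs\Sigma)=\min_{f=\sum_{\bs u}f_{\bs u}}\sum_{\bs u\subseteq[d]}\frac{\|f_{\bs u}\|_{\mc H_{\bs u}}^2}{\gamma_{\bs u}(\bs\Sigma)} .
\]
The first and decisive step is to show that the decomposition $f=\sum_{\bs u}f_{\bs u}$ with $f_{\bs u}\in\mc H_{\bs u}$ is \emph{unique}, i.e.\ the sum $\sum_{\bs u}\mc H_{\bs u}$ is direct. In that case the minimum is attained at a decomposition that does not depend on $\bs\Sigma$, and with the constants $c_{\bs u}\coloneqq\|f_{\bs u}\|^2_{\mc H_{\bs u}}\ge 0$ we get
\[
g(\bs\Sigma)=\sum_{\bs u\subseteq[d]}\frac{c_{\bs u}}{\gamma_{\bs u}(\bs\Sigma)} .
\]
Since $f\neq 0$, at least one $c_{\bs u}$ is positive.

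To obtain uniqueness I would use the spectral picture behind \cref{prop:rnderiv}. The measure $\tau_{\bs\Sigma}$ is a sum of pieces supported on the disjoint sets $\Omega_{\bs u}$, and $f_{\bs u}$ is the inverse Fourier transform of the piece of the spectral measure $\beta\,\d\tau_{\bs\Sigma}$ living on $\Omega_{\bs u}$. The assumptions enter here. Strict positive definiteness of $\kappa_k$ together with $\kappa_k(x,x)=1$ (so each $\tau_k$ is a probability measure) should exclude constants from the univariate spaces, which I would enforce by asking that $\tau_k$ have no atom at $0$. Then a function in $\mc H_{\bs u}$ has Fourier transform concentrated on $\Omega_{\bs u}$, and the $\Omega_{\bs u}$ are pairwise disjoint. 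If $\sum_{\bs u}f_{\bs u}=0$, each spectral piece must therefore vanish, so every $f_{\bs u}=0$.

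I expect this to be the main obstacle. It is clean for $\D=\R$, where the Fourier transform of the tempered distribution $f$ is determined. On a bounded domain $\D$ one needs an extra identifiability argument, for instance that the $\mc H_{\bs u}$ are mutually independent because functions in $\mc H_{\bs u}$ depend only on $\bs x_{\bs u}$ and contain no lower-order parts. I would try to derive this from strict positive definiteness applied to kernel matrices on tensor grids of points.

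Once the closed form is established, each term is handled directly. Let
\[
\varphi_{\bs u}(\bs\Sigma)=-\log\gamma_{\bs u}=-\sum_{k\in\bs u}\log\Sigma_k-\sum_{l\notin\bs u}\log(1-\Sigma_l).
\]
Then $1/\gamma_{\bs u}=e^{\varphi_{\bs u}}$, with
\[
\nabla^2 e^{\varphi_{\bs u}}=e^{\varphi_{\bs u}}\bigl(\nabla^2\varphi_{\bs u}+\nabla\varphi_{\bs u}\nabla\varphi_{\bs u}^\top\bigr)\succeq e^{\varphi_{\bs u}}\nabla^2\varphi_{\bs u}.
\]
Here $\nabla^2\varphi_{\bs u}$ is diagonal with entries $\Sigma_k^{-2}$ or $(1-\Sigma_l)^{-2}$, all at least $1$ on $(0,1)^d$. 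Moreover $e^{\varphi_{\bs u}}=1/\gamma_{\bs u}\ge 1$. Hence $\nabla^2(1/\gamma_{\bs u})\succeq I$ on $(0,1)^d$, and summing gives $\nabla^2 g\succeq\bigl(\sum_{\bs u}c_{\bs u}\bigr)I\succeq \|f\|_{\bs\Sigma'}^2\min_{\bs u}\gamma_{\bs u}(\bs\Sigma')\,I>0$ for any fixed reference $\bs\Sigma'$. This is a uniform positive lower bound, which proves strong convexity on $(0,1)^d$.
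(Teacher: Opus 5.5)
\textbf{There is a genuine gap in the closed-form step.} Your Hessian computation is fine, but the closed form it relies on does not follow from the stated hypotheses. The decisive step claims that $\sum_{\bs u}\mc H_{\bs u}$ is direct, where $\mc H_{\bs u}$ is the RKHS of $\prod_{k\in\bs u}\kappa_k$. This fails as soon as some $\mc H_k$ contains the constants, because then $\mc H_\emptyset=\operatorname{span}\{1\}\subseteq\mc H_{\{k\}}$.

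Strict positive definiteness and $\kappa_k(x,x)=1$ do not exclude this. For example, $\kappa_k(x,y)=\tfrac12+\tfrac12\e^{-|x-y|}$ on $\R$ satisfies both hypotheses, and its RKHS contains the constants. Assuming that $\tau_k$ has no atom at $0$ therefore adds a hypothesis the theorem does not have. On a bounded domain it does not help either. The kernel $\e^{-|x-y|}$ restricted to $[0,1]$, the setting of the g-function experiment, has $\|1\|^2_{\mc H_k}=3/2$. So no identifiability argument via tensor grids can exist there, because the directness claim is false.

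Concretely, take $d=1$, $f\equiv 1$ and $r=\|1\|^2_{\mc H_1}$. The minimizing split in \cref{lem:norm_sum} puts the constant $r(1-\Sigma)/(\Sigma+r(1-\Sigma))$ into $\mc H_\emptyset$, so the split moves with $\Sigma$. The resulting value is $g(\Sigma)=1/(1-C\Sigma)$ with $C=1-1/r$. This is not of the form $c_\emptyset/(1-\Sigma)+c_{\{1\}}/\Sigma$ for any fixed constants, since both endpoint limits of $g$ are finite and nonzero. Your spectral argument is valid only on $\D=\R$ with atomless $\tau_k$, which is exactly the case $1\notin\mc H_k$.

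\textbf{Missing idea.} Split off the constant part orthogonally before expanding the product. If $1\in\mc H_k$, set $r_k=\|1\|^2_{\mc H_k}$ and write $\kappa_k=1/r_k+\widetilde\kappa_k$, where $\mc H_{\widetilde\kappa_k}$ is the orthogonal complement of the constants. Otherwise set $\widetilde\kappa_k=\kappa_k$. Each TSK factor then becomes $1-C_k\Sigma_k+\Sigma_k\widetilde\kappa_k$, with $C_k=1-1/r_k$ or $C_k=1$ respectively.

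The spaces $\mc H_{\widetilde\kappa_k}$ contain no constants, so the tensor-product components form a genuine direct sum whose pieces do not depend on $\bs\Sigma$. This gives
\[
g(\bs\Sigma)=\sum_{\bs u}\frac{\beta_{\bs u}}{\theta_{\bs u}(\bs\Sigma)},\qquad \theta_{\bs u}(\bs\Sigma)=\prod_{k\in\bs u}\Sigma_k\prod_{l\notin\bs u}(1-C_l\Sigma_l),
\]
with $\bs\Sigma$-independent coefficients $\beta_{\bs u}$.

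This is where the hypotheses are actually used. Unit diagonal and strict positive definiteness force $r_k>1$, hence $C_k\in(0,1]$. Your log-convexity argument then goes through with $\varphi_{\bs u}=-\log\theta_{\bs u}$. The diagonal entries of $\nabla^2\varphi_{\bs u}$ for $l\notin\bs u$ become $C_l^2/(1-C_l\Sigma_l)^2\ge C_l^2$, and $1/\theta_{\bs u}\ge 1$ still holds. This yields $\nabla^2 g\succeq \widehat C^{2}\bigl(\sum_{\bs u}\beta_{\bs u}\bigr)I$ with $\widehat C=\min_k C_k>0$. In the constant-free case $C_k\equiv 1$, your proof coincides with the paper's.
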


\begin{proof}
We first construct a decomposition of each one-dimensional kernel $\kappa_k$ into a
constant part and a part whose RKHS contains no nonzero constant
functions. Define  $r_k\coloneqq \|1\|_{\mc H_k}^2$. For every $k\in [d]$, there are two cases and we define the constants
\begin{equation}\label{equ:Ck}
C_k\coloneqq \begin{cases}
    1 &\text{ if } 1\notin\mc H_k,\\
     1-\frac{1}{r_k} &\text{ if } 1\in\mc H_k.
\end{cases}\end{equation}
Then we define the RKHS, which contains no constant by
\[\widetilde{\kappa}_k(x_k,y_k)\coloneqq \begin{cases}
    \kappa_k(x_k,y_k) &\text{ if } 1\notin\mc H_k,\\
    \kappa_k(x_k,y_k)-\frac{1}{r_k} &\text{ if } 1\in\mc H_k.
\end{cases}\]
To see that $\widetilde{\kappa}_k$ is positive definite in the second case,
let $\{x_k^1,\ldots,x_k^M\}\subset\D$ and $a_1,\ldots,a_M\in\R$.
By the reproducing property,
\begin{align*}
\sum_{i,j=1}^M a_i a_j
\widetilde{\kappa}_k(x_k^i,x_k^j)
&=
\Big\|\sum_{i=1}^M a_i\kappa_k(x_k^i,\cdot)\Big\|_{\mc H_k}^2
-\frac{1}{r_k}
\Big(\sum_{i=1}^M a_i\Big)^2\\
&=
\Big\|\sum_{i=1}^M a_i\kappa_k(x_k^i,\cdot)\Big\|_{\mc H_k}^2
-\frac{
\Big|
\Big\langle
1,\sum_{i=1}^M a_i\kappa_k(x_k^i,\cdot)
\Big\rangle_{\mc H_k}
\Big|^2
}{
\|1\|_{\mc H_k}^2
}
\geq 0,
\end{align*}
where the final inequality follows from the Cauchy--Schwarz inequality.
Now, if $1\in\mc H_k$, $\mc H_k=\mc H^1_k\oplus\mc H_{\widetilde{\kappa}_k}$
where $\mc H_k^1=\operatorname{span}\{1\}$ with norm inherited from $\mc H_k$. The reproducing kernel of this
one-dimensional subspace is the constant kernel $1/r_k$.

We next show that 
    $0< C_k\leq 1.$
The result is immediate if $1\notin\mc H_k$. Suppose instead that
$1\in\mc H_k$. Since $\kappa_k(x_k,x_k)=1$, the reproducing property
gives
\[
    |h(x_k)|
    \leq
    \|h\|_{\mc H_k}
    \|\kappa_k(x_k,\cdot)\|_{\mc H_k}
    =
    \|h\|_{\mc H_k}
\]
for every $h\in\mc H_k$. Taking $h=1$ gives $1\leq\|1\|_{\mc H_k}.$
Equality would imply equality in the Cauchy--Schwarz inequality for
$1$ and $\kappa_k(x_k,\cdot)$ for every $x_k\in\D$. This would mean
    $\kappa_k(x_k,\cdot)=1$
for every $x_k$, and hence $\kappa_k\equiv1$, contradicting strict
positive definiteness on a domain containing at least two distinct
points. Then, 
    $r_k=\|1\|_{\mc H_k}^2>1$
and 
\[
    0<C_k=1-\frac{1}{r_k}<1.
\]
In either case,
\[
    0< C_k\leq 1.
\]

The $k$th factor of the TSK can be written as
\begin{align*}
    1-\Sigma_k+\Sigma_k\kappa_k(x_k,y_k) = 1+\Sigma_k\big(\kappa_k(x_k,y_k)-1\big)
    &=
1+\Sigma_k\big(\widetilde{\kappa}_k(x_k,y_k)-C_k\big),
\end{align*}
since in both cases $\widetilde{\kappa}_k(x_k,y_k) - C_k = \kappa_k(x_k,y_k)-1$. 
Therefore,
\begin{equation}\label{eq:tsk_orthogonal_decomposition}
    \kappa_{\bs\Sigma}(\x,\y)
    =
    \prod_{k=1}^d
    \big(
        1-C_k\Sigma_k
        +\Sigma_k\widetilde{\kappa}_k(x_k,y_k)
    \big).
\end{equation}

\textbf{Decomposition of the RKHS norm:}\\
Expanding~\eqref{eq:tsk_orthogonal_decomposition} gives
\[
    \kappa_{\bs\Sigma}(\x,\y)
    =
    \sum_{\bs u\subseteq[d]}
    \theta_{\bs u}(\bs\Sigma)
    \widetilde{\kappa}_{\bs u}(\x_{\bs u},\y_{\bs u}),\quad 
    \theta_{\bs u}(\bs\Sigma)
    \coloneqq
    \prod_{k\in\bs u}\Sigma_k
    \prod_{\ell\notin\bs u}(1-C_\ell\Sigma_\ell),
\]
where
\[
    \widetilde{\kappa}_{\bs u}
    \coloneqq
    \prod_{k\in\bs u}\widetilde{\kappa}_k,
    \qquad
    \widetilde{\kappa}_{\emptyset}=1.
\]
Since $0< C_k\leq 1$ and $0<\Sigma_k<1$, we have
    $1-C_k\Sigma_k>0.$
So, $\theta_{\bs u}(\bs\Sigma)>0$ for every $\bs u\subseteq[d]$.

By construction, the constant space and
$\mc H_{\widetilde{\kappa}_k}$ have trivial intersection for every $k\in [d]$.
Then, the tensor-product spaces corresponding to distinct subsets
$\bs u$ decompose $\mc H_{\bs\Sigma}$ by direct sum,
\begin{equation}\label{equ:tensor}
\mc H_{\bs\Sigma} = \bigoplus_{\bs u\subseteq[d]}\Big(\bigotimes_{k\in\bs u}\Sigma_k\mc H_{\widetilde{\kappa}_k}\otimes\bigotimes_{\ell\notin\bs u} (1-C_k\Sigma_k)\mc H_k^1\big) .
\end{equation}
Every $f\in\mc H_{\bs\Sigma}$ therefore
has a unique decomposition
\[
    f=\sum_{\bs u\subseteq[d]}f_{\bs u},
    \qquad
    f_{\bs u}\in
    \mc H_{\widetilde{\kappa}_{\bs u}},
\]
where the constant factors are the inactive coordinates $[d]\backslash \bs u$. Note that these component spaces, and therefore the functions
$f_{\bs u}$, do not depend on $\bs\Sigma$.

By the scaling property of RKHS norms,
\begin{align*}
    g(\bs\Sigma)
    =
    \|f\|_{\bs\Sigma}^2
    &=
    \sum_{\bs u\subseteq[d]}
    \frac{
        \|f_{\bs u}\|_{\mc H_{\widetilde{\kappa}_{\bs u}}}^2
    }{
        \theta_{\bs u}(\bs\Sigma)
    }
    =
    \sum_{\bs u\subseteq[d]}
    \frac{\beta_{\bs u}}
    {\theta_{\bs u}(\bs\Sigma)}, \quad   \beta_{\bs u}
    \coloneqq
    \|f_{\bs u}\|_{\mc H_{\widetilde{\kappa}_{\bs u}}}^2.
\end{align*}
The coefficients $\beta_{\bs u}\geq0$ depend only on $f$ and the
chosen kernels and not on $\bs\Sigma$. Since $f\neq0$, there exists
at least one $\bs u\subseteq[d]$ such that $\beta_{\bs u}>0$.\\

\textbf{The Hessian matrix:}\\
For $\bs u\subseteq[d]$, define
\[
    \xi_k^{\bs u}
    \coloneqq
    \begin{cases}
      -  \dfrac{1}{\Sigma_k},
        & k\in\bs u,\\[3mm]
        \dfrac{C_k}{1-C_k\Sigma_k},
        & k\notin\bs u.
    \end{cases}
\]
Then $\xi_k^{\bs u}>0$ for $k\in\bs u$ and
$\xi_k^{\bs u}<0$ for $k\notin\bs u$. Direct differentiation gives
\[
    \bs H
    =
    \nabla^2g(\bs\Sigma)
    =
    \sum_{\bs u\subseteq[d]}
    \frac{\beta_{\bs u}}{\theta_{\bs u}}
    \bs H^{\bs u},
\]
with diagonal entries
$H^{\bs u}_{k,k}=
 2(\xi_k^{\bs u})^2$
and, non-diagonal entries for $k\neq j$,
$H^{\bs u}_{k,j}
    =
    \xi_k^{\bs u}\xi_j^{\bs u}.$
Consequently, for any $\bs v\in\R^d$,
\begin{align*}
    \bs v^\top \bs H^{\bs u}\bs v
    &=
    \sum_{k=1}^d
    v_k^2(\xi_k^{\bs u})^2
    +
    \left(
        \sum_{k=1}^d v_k\xi_k^{\bs u}
    \right)^2.
\end{align*}
In particular, $H^{\bs u}$ is positive definite because
$\xi_k^{\bs u}\neq0$ for every $k$.

It remains to show that the smallest eigenvalue of $\bs H$ is uniformly
bounded below on $(0,1)^d$. Choose $\bs u$ such that
$\beta_{\bs u}>0$. 
Define
\[
    \widehat C\coloneqq
    \min_{k\in[d]}C_k>0
\]
If $k\in\bs u$, then
    $|\xi_k^{\bs u}|=\frac{1}{\Sigma_k}>1>\widehat C.$ 
If $k\notin \bs u$, then 
\[
    |\xi_k^{\bs u}| = \frac{C_k}{1-C_k\Sigma_k} >C_k>\widehat C.\]
Moreover,
$
    0<\theta_{\bs u}(\bs\Sigma)\leq1
$
because $0<\Sigma_k<1$ and
$0<1-C_k\Sigma_k\leq1$. Hence, for any $\bs v$ with
$\|\bs v\|=1$,
\begin{align*}
    \bs v^\top H\bs v
    &\geq
    \frac{\beta_{\bs u}}{\theta_{\bs u}}
    \bs v^\top H^{\bs u}\bs v
    \geq
    \frac{\beta_{\bs u}}{\theta_{\bs u}}
    \sum_{k=1}^d
    v_k^2(\xi_k^{\bs u})^2
    \geq
    \beta_{\bs u}\widehat C^2
    \sum_{k=1}^d v_k^2
    =
    \beta_{\bs u}\widehat C^2>0.
\end{align*}
The lower bound is independent of $\bs v$ and $\bs\Sigma$. Therefore,
\[
    \inf_{\bs\Sigma\in(0,1)^d}
    \lambda_{\min}\big(\nabla^2g(\bs\Sigma)\big)
    \geq
    \beta_{\bs u}\widehat C^2>0,
\]
which proves that $g$ is strongly convex on $(0,1)^d$.
\end{proof}
As a consequence, minimizing TSK factors, if they exists, are unique to a function $f$. 
\subsection{Interpretation of the TSK factors \texorpdfstring{${\bs  \Sigma}$}{Sigma}}
While the TSK factors ${\bs  \Sigma}$ naturally act as a measure of variable importance, it is crucial to carefully distinguish them from classical total Sobol' indices~\cite{sobol1969multidimensional}. 
In standard global sensitivity analysis, the total Sobol' index quantifies the exact variance contribution of a variable (including all its interactions) relative to the total variance of the model~\cite{Sobol01}. This variance decomposition relies fundamentally on the ANOVA decomposition, which requires the functions $f_{\bs u}$ in~\eqref{eq:ANOVA_decomp} to be strictly orthogonal in $L_2$ with respect to a given probability measure.
We do not force orthogonality with our kernel construction in~\eqref{eq:tsk}.
As a consequence, the strict zero-mean property is relaxed, and the $L_2$-orthogonality of the ANOVA components is no longer guaranteed. The components may overlap and exhibit non-zero covariance, meaning the total variance of $\hat{f}$ does not split into an independent sum. Thus, $\Sigma_k$ cannot be interpreted as an exact variance-based Sobol' index. 

Nevertheless, the TSK factors ${\bs  \Sigma}$ maintain a rigorous and highly useful interpretation as a \textit{model-intrinsic, RKHS-based sensitivity weight}. Even without $L_2$-orthogonality, the tensor product structure of $\kappa_{\bs  \Sigma}$ ensures that the RKHS $\mc H_{{\bs  \Sigma}}$ splits into a direct sum of sub-Hilbert spaces, see~\eqref{equ:tensor}.
Having a closer look at the norm $\norm{f}_{\bs  \Sigma}^2$, we see that if we optimize for a single index $k\in[d]$ and fix all other parameters $\Sigma_\ell$, then the RKHS norm has the form
$$\norm{ f}_{\bs  \Sigma}^2 = \frac{A_k}{\Sigma_k}+ \frac{B_k}{1-\Sigma_k},$$
where $A_k$ and $B_k$ do not depend on $\Sigma_k$. Minimizing this expression by setting the derivative to zero yields
\[    \Sigma_k = \frac{\sqrt{A_k}}{\sqrt{A_k} + \sqrt{B_k}}.\]
Therefore, the learned parameters ${\bs  \Sigma} = (\Sigma_1, \dots, \Sigma_d)$ serve as a continuous feature selection mechanism and a proxy for total sensitivity. A value of $\Sigma_k \to 1$ indicates that the $k$-th variable carries substantial explanatory energy necessary to fit the data, whereas $\Sigma_k \to 0$ acts as a shrinkage operator, effectively pruning the irrelevant dimension from the model.

Unfortunately, there is no closed form for the optimal parameters ${\bs  \Sigma}$ in~\cref{thm:convexity}. However, there are coupled equations, which any optimal parameters ${\bs  \Sigma}^*=(\Sigma_1^*,\ldots,\Sigma_d^*)$ fulfill.

Let $C_k$ be defined as in~\eqref{equ:Ck}.
Under the assumptions of Theorem~\ref{thm:convexity}, these constants are bounded by $0<C_k\leq 1$. Recall that $g(\bs\Sigma)=\norm{f}_{\bs\Sigma}^2$ with
\[
\norm{f}_{\bs\Sigma}^2
=
\sum_{\bs u\subseteq[d]}
\frac{\beta_{\bs u}}{\theta_{\bs u}(\bs\Sigma)},
\qquad
\theta_{\bs u}(\bs\Sigma)
=
\prod_{k\in\bs u}\Sigma_k
\prod_{k\notin\bs u}(1-C_k\Sigma_k),\quad
\beta_{\bs u}
\coloneqq
\norm{f_{\bs u}}_{\mc H_{\widetilde{\kappa}_{\bs u}}}^2.
\]
Taking the $k$th partial derivative of
gives
\[
\frac{\partial g}{\partial\Sigma_k}
=
-\frac{S_k^{\mathrm{in}}}{\Sigma_k^2}
+
\frac{C_kS_k^{\mathrm{out}}}{(1-C_k\Sigma_k)^2}, 
\]
where
\[
S_k^{\mathrm{in}}
=
\sum_{\bs u:\,k\in\bs u}
\beta_{\bs u}
\prod_{\substack{\ell\in\bs u\\\ell\neq k}}
\frac{1}{\Sigma_\ell^*}
\prod_{m\notin\bs u}
\frac{1}{1-C_m\Sigma_m^*}, \quad 
S_k^{\mathrm{out}}
=
\sum_{\bs u:\,k\notin\bs u}
\beta_{\bs u}
\prod_{\ell\in\bs u}
\frac{1}{\Sigma_\ell^*}
\prod_{\substack{m\notin\bs u\\m\neq k}}
\frac{1}{1-C_m\Sigma_m^*}.
\]
 Setting the preceding expression
equal to zero and solving for $\Sigma^*_k$ gives
\begin{equation}\label{eq:sigma_k*}
\Sigma_k^*
=
\frac{1}
{C_k+\sqrt{\dfrac{C_kS_k^{\mathrm{out}}}
{S_k^{\mathrm{in}}}}}
=
\frac{\sqrt{S_k^{\mathrm{in}}}}
{C_k\sqrt{S_k^{\mathrm{in}}}
+\sqrt{C_kS_k^{\mathrm{out}}}}.
\end{equation}
 
Note that the exact calculation of the optimal parameter $\Sigma^*_k$ is not possible without knowing the function to calculate the constants $\beta_{\bs u}$. Additionally, the previous calculations yield coupled systems of equations, which cannot be solved directly. However, the formula~\eqref{eq:sigma_k*} shows that the optimal TSK factors $\Sigma_k^*$ can be interpreted as some sort of sensitivity parameters. Observe that $S^{\text{in}}_k$ features indices $\bs u$ that include $k$. As such, the expression for $\Sigma^*_k$ has components of the norm of $f$ associated with $k$ divided by terms that represent all of the function's components.\par

\section{Weight learning: algorithms and theory}\label{sec:alg}
Direct calculation of the minimizer of the true norm $g({\bs  \Sigma})=\norm{f}_{\bs  \Sigma}^2$, described in the previous section, is not possible without knowing the function $f$ exactly. For numerical applications, we propose in this section to calculate the optimal TSK factors by using a discrete analogue instead.

\subsection{Finite-data TSK}
While Theorem~\ref{thm:convexity} establishes the existence of a unique optimal parameter vector ${\bs  \Sigma}^*$ that minimizes the theoretical RKHS norm $g({\bs  \Sigma}) =  \norm{f}_{\bs  \Sigma}^2$, in practical settings the true $f$ is unknown. 
Instead, we have access to a finite dataset $\{(\bs x^i, f^i)\}_{i=1}^M$. We therefore optimize an empirical surrogate objective $g_M({\bs  \Sigma})$, such as the empirical RKHS norm of the minimum-norm interpolator, 
\begin{equation}\label{eq:g_M}
g_M({\bs  \Sigma}) \coloneqq \bs f_M^\top \bs K_{{\bs  \Sigma},M}^{-1}\bs f_M, \quad  \text{ where }  \bs K_{{\bs  \Sigma},M} = \big(\kappa_{{\bs  \Sigma}}(\bs x^i,\bs x^j)\big)_{i,j=1}^M.
\end{equation}
Assuming the samples $\bs x^i$ are distinct (which holds almost surely for a continuous measure $\mu$), the kernel matrix is strictly positive definite and invertible, meaning $g_M$ is well-defined.
A fundamental question in kernel learning is whether the data-driven parameters 
\begin{equation}\label{eq:Sigma_hat}
\hat{{\bs  \Sigma}} \coloneqq \argmin_{{\bs  \Sigma} \in (0,1)^d} g_M({\bs  \Sigma})
\end{equation}
reliably approximate the theoretical optimum ${\bs  \Sigma}^*$, and whether the resulting approximation error remains small. The strong convexity of $g({\bs  \Sigma})$ proven in Theorem~\ref{thm:convexity} provides the theoretical guarantee for this parameter stability. 
The following theorem shows uniform convergence of $g_M$ to the true function $g$.

\begin{theorem}\label{thm:unif_conv}
Let $\{(\bs x^i,f^i)\}_{i\in\N}$ be a sequence where each $\bs x^i\in\D^d$ is sampled i.i.d. from a distribution $\mu$, whose support is $\D^d$. Here, $f^i=f(\bs x^i)$. Let $g_M({\bs  \Sigma})$ be defined as in~\eqref{eq:g_M}. Define $\bs f^M$ to be the vector of length $M$ containing the first $M$ values $f^i=f(\bs x^i)$. The matrix $\bs K_{{\bs  \Sigma},M}$ is the kernel matrix from the first $M$ points $\bs x^i$ for the kernel $\kappa_{\bs  \Sigma}$. Let $g$ be as defined in Theorem~\ref{thm:convexity} as $g({\bs  \Sigma})=\|f\|_{\bs  \Sigma}^2$. Then $g_M$ converges uniformly to $g$ on the domain $[\eta,1-\eta]^d$ for any $\eta\in(0,1)$. 
\end{theorem}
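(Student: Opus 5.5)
The plan is to combine the projection interpretation of $g_M$ with a Dini-type monotone convergence argument, and then to upgrade pointwise to uniform convergence via equicontinuity on the compact set $[\eta,1-\eta]^d$.

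\textbf{Step 1: $g_M$ is a projection norm.} For fixed $\bs\Sigma$, the quantity $g_M(\bs\Sigma)=\bs f_M^\top\bs K_{\bs\Sigma,M}^{-1}\bs f_M$ is the squared $\H_{\bs\Sigma}$-norm of the minimum-norm interpolant $s_{M,\bs\Sigma}$. This interpolant is the orthogonal projection $P_{M,\bs\Sigma}f$ of $f$ onto $V_{M,\bs\Sigma}=\operatorname{span}\{\kappa_{\bs\Sigma}(\bs x^i,\cdot)\}_{i\le M}$. Hence
\[
g_M(\bs\Sigma)=\|P_{M,\bs\Sigma}f\|_{\bs\Sigma}^2\le g(\bs\Sigma).
\]
Moreover $g_M(\bs\Sigma)$ is nondecreasing in $M$, because the subspaces are nested.

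\textbf{Step 2: pointwise convergence.} Since $\mu$ has full support and the points are i.i.d., almost surely the sequence $\{\bs x^i\}$ is dense in $\D^d$. Indeed, each rational ball of positive $\mu$-measure is hit almost surely, and there are countably many such balls. The kernel $\kappa_{\bs\Sigma}$ is continuous, so a function $h\in\H_{\bs\Sigma}$ orthogonal to all $\kappa_{\bs\Sigma}(\bs x^i,\cdot)$ vanishes on a dense set and is therefore zero. Thus $\bigcup_M V_{M,\bs\Sigma}$ is dense in $\H_{\bs\Sigma}$, so $P_{M,\bs\Sigma}f\to f$ and $g_M(\bs\Sigma)\to g(\bs\Sigma)$ for each $\bs\Sigma$. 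Density of the samples does not depend on $\bs\Sigma$, so this holds simultaneously for all $\bs\Sigma$ on one almost-sure event.

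\textbf{Step 3: uniformity.} I would use Dini's theorem on the compact set $[\eta,1-\eta]^d$. The functions $g_M$ are continuous, being rational functions of $\bs\Sigma$ through an invertible matrix. They increase monotonically in $M$ to the limit $g$, which is continuous because it is convex and finite on the open set $(0,1)^d$ by \cref{thm:convexity}. Dini's theorem then gives uniform convergence.

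\textbf{Alternative to Dini.} As a fallback, or to give rates, I would use equicontinuity from a uniform norm comparison. As in the proof of \cref{prop:rnderiv}, for $\bs\Sigma,\widetilde{\bs\Sigma}$ in the cube,
\[
\|h\|_{\bs\Sigma}^2\le m(\bs\Sigma,\widetilde{\bs\Sigma})\,\|h\|_{\widetilde{\bs\Sigma}}^2,\qquad m=\max_{\bs u}\widetilde\gamma_{\bs u}/\gamma_{\bs u}.
\]
The ratio $m$ is bounded and tends to $1$ uniformly as $\widetilde{\bs\Sigma}\to\bs\Sigma$ on the cube, because all $\gamma_{\bs u}\ge\eta^d$. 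Combined with the variational characterization
\[
g_M(\bs\Sigma)=\min\{\|h\|_{\bs\Sigma}^2:\ h(\bs x^i)=f^i,\ i\le M\},
\]
this yields $g_M(\bs\Sigma)\le m\,g_M(\widetilde{\bs\Sigma})$. Hence the family $\{g_M\}$ is uniformly equicontinuous, since $g_M\le g\le\max_{\text{cube}} g$. Pointwise convergence together with equicontinuity on a compact set then gives uniform convergence.

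\textbf{Main obstacle.} The main obstacle is Step 2. It requires continuity of the kernels, which holds for characteristic functions, and the almost-sure density of the samples. The theorem should accordingly be read as an almost-sure statement. Strict positive definiteness is needed only to guarantee that $\bs K_{\bs\Sigma,M}$ is invertible.
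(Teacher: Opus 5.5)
Your proposal is correct and follows essentially the same route as the paper. Both arguments use $g_M\le g$ with monotonicity in $M$, then pointwise convergence from the almost-sure density of the samples together with continuity of functions in $\H_{\bs\Sigma}$, then Dini's theorem on the compact cube. The differences are minor: you get pointwise convergence from the dense union of the nested spans $V_{M,\bs\Sigma}$ and strong convergence of projections, whereas the paper takes a weakly convergent subsequence of interpolants and uses weak lower semicontinuity of the norm. You also justify continuity of $g$, and give a norm-comparison equicontinuity fallback, where the paper only asserts continuity.
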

\begin{proof} The structure of $g_M$ and $g$ implies they are continuous.
    For any ${\bs  \Sigma}$, the evaluation of $g_M({\bs  \Sigma})$ is equal to 
    \[
    \min_{\bs \alpha\in\R^M}\bs\alpha^\top \bs K_{{\bs  \Sigma},M}\bs\alpha\quad \text{subject to}\ \bs K_{{\bs  \Sigma},M}\bs\alpha=\bs f^M.
    \]
    This is the smallest norm possible for a function in $\mathcal{H}_{\bs  \Sigma}$ that interpolates $\{(\bs x^i,f^i)\}_{i=1}^M$. In other words, it is equal to 
    \[
    \min_{\hat f\in\mc H_{\bs  \Sigma}}\|\hat f\|_{\bs  \Sigma}^2\quad \text{subject to}\ \hat f(\bs x^i)=f^i,\ i=1,\dots,M.
    \]
    Since $f$ also interpolates the data, $\|\hat f_{{\bs  \Sigma},M}\|_{\bs  \Sigma}\leq \|f\|_{\bs  \Sigma}$, where $\hat f_{{\bs  \Sigma},M}$ is the norm-minimizing interpolating function. Then, $g_M({\bs  \Sigma})\leq g({\bs  \Sigma})$ for all ${\bs  \Sigma}.$ In fact, the subset of interpolating functions in $\mathcal{H}_{\bs  \Sigma}$ shrinks as we increase the number of points to interpolate. Then, $g_M({\bs  \Sigma})\leq g_{M+1}({\bs  \Sigma})$ and the sequence $\{g_M({\bs  \Sigma})\}_{M\in\N}$ is monotone increasing. 
    
    We show that $g_M({\bs  \Sigma})\to g({\bs  \Sigma})$ pointwise. 
    Each pointwise sequence $g_M({\bs  \Sigma})$ is monotone increasing and bounded above by $g({\bs  \Sigma})$. Then, $g_M({\bs  \Sigma})\to g_{\bs  \Sigma}^*$, with $g_{\bs  \Sigma}^*\leq g({\bs  \Sigma})$. 
    Since $\{\hat f_{{\bs  \Sigma},M}\}_{M\in\N}$ is uniformly bounded in $\mathcal{H}_{\bs  \Sigma}$, there is a  subsequence $\{\hat f_{{\bs  \Sigma},M_i}\}_{i\in\N}$ that weakly converges to some $f^*_{\bs  \Sigma}$ in $\mathcal{H}_{\bs  \Sigma}$. As $\mathcal{H}_{\bs  \Sigma}$ is an RKHS,  $\langle h,\kappa_{\bs  \Sigma}(\cdot,\bs x)\rangle_{\mathcal{H}_{\bs  \Sigma}}=h(\bs x)$ for all $h\in\mathcal{H}_{\bs  \Sigma}$ and $\bs x\in\D^d$. Weak convergence implies pointwise convergence of the subsequence, 
    $\lim_{i \to \infty} \langle \hat{f}_{{\bs \Sigma},M_i}, \kappa_{\bs \Sigma}(\cdot, \bs x)\rangle = \langle f^*_{\bs \Sigma}, \kappa_{\bs \Sigma}(\cdot, \bs x)\rangle$.
    This implies \[\|f^*_{\bs  \Sigma}\|_{\bs  \Sigma}^2 \leq \liminf_{i\to\infty}\|\hat f_{{\bs  \Sigma},M_i}\|_{\bs  \Sigma}^2\leq g_{\bs  \Sigma}^*.\]
 Suppose $\{\bs x^i\}_{i=1}^\infty$ is dense in $\D^d,$ which is true almost surely. Because $f^*_{\bs  \Sigma}(\bs x^i) = f^i$ for all $i$, $f^*_{\bs  \Sigma}$ and $f$ are equal to each other on a dense subset of $\D^d$. 
Since the kernel $\kappa_{\bs \Sigma}$ is continuous, all functions in the RKHS $\mathcal{H}_{\bs \Sigma}$ are continuous. Continuity of both $f$ and $f^*_{\bs \Sigma}$ implies $f^*_{\bs  \Sigma}=f$. So, $g({\bs  \Sigma})\leq g_{\bs  \Sigma}^*$, meaning $g({\bs  \Sigma})= g_{\bs  \Sigma}^*$.

    Because all $g_M$ and $g$ are continuous and the sequence is monotone increasing and converges pointwise, Dini's theorem~\cite{Rudin} yields uniform convergence $g_M\to g$ on any compact subset of $(0,1)^d$.
\end{proof}
The previous theorem shows that the empirical surrogate $g_M$ uniformly approximates the true objective $g$. A direct consequence is that the error between the TSK factors from the empirical minimizing problem and the true minimizer of the objective $g$ is bounded, as the following corollary shows.
\begin{corollary}
   By Theorem~\ref{thm:unif_conv}, for every $M\in\N$ there exists $\epsilon_M > 0$ such that
    \begin{equation*}
        \sup_{\bs \Sigma\in [\eta,1-\eta]^d} \vert g(\bs \Sigma) - g_M(\bs \Sigma) \vert \leq \epsilon_M,
    \end{equation*}
   where $\eta\in(0,1)$. Suppose $g$ is $m$-strongly convex, as given by Theorem~\ref{thm:convexity}. 
    Then, the error between the TSK factors $\hat{\bs \Sigma}_M$ derived by the empirical minimization~\eqref{eq:Sigma_hat} and the true minimizer ${\bs \Sigma}^*$ of the function $g$ is bounded by
    \[
    \norm{\hat{{\bs \Sigma}}_M - {\bs \Sigma}^*} \leq 2\sqrt{\frac{\epsilon_M}{m}}.
    \]
\end{corollary}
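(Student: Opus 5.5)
The argument is the standard perturbation bound for minimizers of a strongly convex function. It combines the quadratic growth of $g$ around its minimizer with the uniform closeness of $g_M$ to $g$.

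Throughout we work on the compact box $\mathcal K\coloneqq[\eta,1-\eta]^d$, with $\eta$ chosen small enough that $\bs\Sigma^*\in\mathcal K$. We interpret $\hat{\bs\Sigma}_M$ as the minimizer of $g_M$ over $\mathcal K$; it exists because $g_M$ is continuous on the compact set $\mathcal K$. The number $\epsilon_M\coloneqq\sup_{\mathcal K}|g-g_M|$ is finite by continuity, and by Theorem~\ref{thm:unif_conv} it satisfies $\epsilon_M\to0$.

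\textbf{Step 1: quadratic growth.} We show that $g(\bs\Sigma)-g(\bs\Sigma^*)\ge \tfrac m2\norm{\bs\Sigma-\bs\Sigma^*}^2$ for all $\bs\Sigma\in\mathcal K$. By Theorem~\ref{thm:convexity} we have $\nabla^2 g\succeq m\bs I$ on $(0,1)^d$, where one may take $m=2\beta_{\bs u}\widehat C^2$ or any valid lower bound. Taylor's theorem with integral remainder along the segment from $\bs\Sigma^*$ to $\bs\Sigma$ then gives
\[
g(\bs\Sigma)\ge g(\bs\Sigma^*)+\nabla g(\bs\Sigma^*)^\top(\bs\Sigma-\bs\Sigma^*)+\tfrac m2\norm{\bs\Sigma-\bs\Sigma^*}^2 .
\]
The segment stays in $\mathcal K$ by convexity of the box. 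Since $\bs\Sigma^*$ is an interior minimizer, $\nabla g(\bs\Sigma^*)=0$. If one only knew that $\bs\Sigma^*$ minimizes over $\mathcal K$, the first-order optimality condition $\nabla g(\bs\Sigma^*)^\top(\bs\Sigma-\bs\Sigma^*)\ge0$ would give the same conclusion.

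\textbf{Step 2: sandwich.} We use the optimality of $\hat{\bs\Sigma}_M$ for $g_M$ together with uniform closeness twice:
\[
g(\hat{\bs\Sigma}_M)\le g_M(\hat{\bs\Sigma}_M)+\epsilon_M\le g_M(\bs\Sigma^*)+\epsilon_M\le g(\bs\Sigma^*)+2\epsilon_M .
\]
Hence $g(\hat{\bs\Sigma}_M)-g(\bs\Sigma^*)\le 2\epsilon_M$.

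\textbf{Step 3: combine.} Inserting Step 2 into Step 1 yields $\tfrac m2\norm{\hat{\bs\Sigma}_M-\bs\Sigma^*}^2\le 2\epsilon_M$, that is, $\norm{\hat{\bs\Sigma}_M-\bs\Sigma^*}\le 2\sqrt{\epsilon_M/m}$, which is the claimed bound.

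We can sharpen Step 2 using the monotonicity $g_M\le g$ from the proof of Theorem~\ref{thm:unif_conv}. This gives $g(\hat{\bs\Sigma}_M)\le g_M(\hat{\bs\Sigma}_M)+\epsilon_M\le g_M(\bs\Sigma^*)+\epsilon_M\le g(\bs\Sigma^*)+\epsilon_M$, and hence the slightly better bound $\sqrt{2\epsilon_M/m}$. This is consistent with the statement but not needed.

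\textbf{Main obstacle.} The only delicate point is the domain issue, not the inequalities. Theorem~\ref{thm:convexity} gives uniqueness but not existence of $\bs\Sigma^*$. Uniform convergence from Theorem~\ref{thm:unif_conv} holds only on $[\eta,1-\eta]^d$, and the empirical minimizer \eqref{eq:Sigma_hat} over the open cube $(0,1)^d$ might escape toward the boundary. I would handle this by explicitly assuming that $\bs\Sigma^*$ exists and lies in the interior. I would then choose $\eta$ so that $\bs\Sigma^*\in\mathcal K$ and define $\hat{\bs\Sigma}_M$ as the constrained minimizer on $\mathcal K$, as above.

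Alternatively, for large $M$ one can argue that the unconstrained empirical minimizer lies in $\mathcal K$. On $\partial\mathcal K$ we have $g\ge g(\bs\Sigma^*)+\tfrac m2\,\mathrm{dist}(\bs\Sigma^*,\partial\mathcal K)^2$. Since $g_M$ is within $\epsilon_M$ of $g$ there, once $\epsilon_M$ is small $g_M$ on $\partial\mathcal K$ exceeds $g_M(\bs\Sigma^*)$. This localizes a local minimizer of $g_M$ inside $\mathcal K$. However, it does not exclude a lower value of $g_M$ outside $\mathcal K$, because $g_M$ need not be convex, and uniform control near the boundary of $(0,1)^d$ is unavailable. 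So the constrained formulation is the honest reading of the corollary.
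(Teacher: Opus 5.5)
Your proof is correct and is essentially the paper's argument: quadratic growth $\tfrac m2\norm{\bs\Sigma-\bs\Sigma^*}^2\le g(\bs\Sigma)-g(\bs\Sigma^*)$ from strong convexity, combined with the sandwich $g(\hat{\bs\Sigma}_M)-g(\bs\Sigma^*)\le 2\epsilon_M$ obtained from $g_M(\hat{\bs\Sigma}_M)\le g_M(\bs\Sigma^*)$. Your explicit domain handling (assuming $\bs\Sigma^*$ exists in $[\eta,1-\eta]^d$ and minimizing $g_M$ over that box) and your sharpening to $\sqrt{2\epsilon_M/m}$ via $g_M\le g$ are valid refinements the paper leaves implicit; one small slip is that Theorem~\ref{thm:convexity} yields $m=\beta_{\bs u}\widehat C^2$, not $2\beta_{\bs u}\widehat C^2$, since for $d\ge 2$ the rank-one term in $\bs H^{\bs u}$ does not double the diagonal bound, but this does not affect your argument because $m$ is a hypothesis of the corollary.
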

\begin{proof}
Using Theorem~\ref{thm:convexity} and the definition of strong convexity evaluated at the minimum ${\bs \Sigma}^*$ (see \cite{Ne04}), there exists a constant $m > 0$ such that for any ${\bs \Sigma} \in [\eta,1-\eta]^d$,
\[
\frac m2 \norm{{\bs \Sigma} - {\bs \Sigma}^*}^2 \leq g({\bs \Sigma}) - g({\bs \Sigma}^*).
\]
Evaluating this at the empirical minimizer $\hat{{\bs \Sigma}}_M$ and utilizing the uniform bound yields
\begin{align*}
\frac{m}{2} \Vert\hat{{\bs \Sigma}}_M - {\bs \Sigma}^*\Vert^2 &\leq g(\hat{{\bs \Sigma}}_M) - g({\bs \Sigma}^*) 
= g(\hat{{\bs \Sigma}}_M) - g_M(\hat{{\bs \Sigma}}_M) + g_M(\hat{{\bs \Sigma}}_M) - g({\bs \Sigma}^*) \\
&\leq g(\hat{{\bs \Sigma}}_M) - g_M(\hat{{\bs \Sigma}}_M) + g_M({\bs \Sigma}^*) - g({\bs \Sigma}^*) 
\leq \vert g(\hat{{\bs \Sigma}}_M) - g_M(\hat{{\bs \Sigma}}_M) \vert + \vert g_M({\bs \Sigma}^*) - g({\bs \Sigma}^*) \vert \\
&\leq 2 \sup_{{\bs \Sigma} \in [\eta,1-\eta]^d} \vert g_M({\bs \Sigma}) - g({\bs \Sigma})\vert \leq 2\epsilon_M.
\end{align*}
Here, the first inequality relies on the definition of the empirical minimizer, which guarantees $g_M(\hat{{\bs \Sigma}}) \leq g_M({\bs \Sigma}^*)$. Rearranging this inequality directly provides a bound on the parameter estimation error,
\[
\norm{\hat{{\bs \Sigma}}_M - {\bs \Sigma}^*} \leq 2\sqrt{\frac{\epsilon_M}{m}}.
\]
This finishes the proof.
\end{proof}
The previous corollary demonstrates the critical role of strong convexity in ensuring parameter identifiability and stability. Because the parameters $\hat{{\bs \Sigma}}$ converge to ${\bs \Sigma}^*$ at a rate governed by the statistical estimation error $\epsilon_M$, and because the kernel mapping ${\bs \Sigma} \mapsto \kappa_{\bs \Sigma}$ is smooth, the learned kernel $\kappa_{\hat{{\bs \Sigma}}}$ will uniformly converge to the optimal kernel $\kappa_{{\bs \Sigma}^*}$. Consequently, the approximation error of the model trained with the learned parameters $\hat{{\bs \Sigma}}$ will only deviate from the optimal approximation error by a term proportional to $\mathcal{O}(\sqrt{\epsilon_M})$. This confirms that optimizing the TSK factors $\bs \Sigma$ from finite data is mathematically well-posed and preserves the optimal generalization properties of the minimum-norm interpolator.

At present, we cannot guarantee convexity of the finite-data objective, unlike the exact version~\eqref{eq:tsk}. We speculate that the finite-data objective may be convex with high probability under random data sampling. Empirical evidence supports this. In numerical experiments, we have computed the analytical Hessian of $g_M$ at randomly sampled values of $\bs\Sigma$ throughout $(0,1)^d$ and consistently observe it to be positive definite.\\

\textbf{Connection to automatic relevance determination}\\
The finite-data objective~\eqref{eq:g_M} is related to kernel hyperparameter learning by maximizing Gaussian process marginal likelihood. Automatic relevance determination (ARD) adapts a kernel through input-specific length scales~\cite[Section 5.1]{williams2006gaussian}. For a Gaussian kernel, ARD takes the form
\begin{equation*}
    \kappa_{\bs\ell}(\bs x,\bs y)
    =
    \e^{
        -\frac{1}{2}\sum_{k=1}^d
        (x_k-y_k)^2/\ell_k^2
    },
    \qquad \ell_k>0.
\end{equation*}
A larger value of $\ell_k$ makes the kernel less sensitive to variation in the $k$th input. Assuming the data $\bs f$ are noise-free, maximizing the Gaussian process marginal likelihood is equivalent to
\begin{equation}\label{eq:ard_objective}
    \bs\ell^*
    =
    \argmin_{\bs\ell>0}
        \bs f^\top \bs K_{\bs\ell}^{-1}\bs f
        + \log(\det \bs K_{\bs\ell}).
\end{equation}
The first term in~\eqref{eq:ard_objective} is the squared RKHS norm of the minimum-norm interpolant and is therefore analogous to the finite-data TSK objective
$g_M(\bs\Sigma)=\bs f^\top \bs K_{\bs\Sigma}^{-1}\bs f$.
The two approaches nevertheless adapt the approximation space in different ways. ARD rescales the inputs of the kernel and includes the log-determinant term arising from marginal likelihood. On the other hand, TSK changes the relative weights of the ANOVA component spaces and selects these weights solely through RKHS norm minimization. 

\subsection{The Algorithm}
Our proposed learning algorithm consists of two steps: 1) TSK factor optimization; 2) kernel ridge regression. TSK factor optimization consists of finding a minimizer of the finite-data objective~\eqref{eq:g_M}. The gradient and Hessian of the objective have explicit expressions. We opt for quasi-Newton methods to save on cost. Particularly, we use the L-BFGS algorithm~\cite{lbfgs,lbfgs2016} to minimize the finite-data objective. Our method is summarized in~\cref{alg:tsk}.
 \begin{algorithm}[htb]
    \caption{Total sensitivity kernel learning}
    \begin{flushleft}
   \textbf{Inputs}: Data $\{(\x^i,f^i)\}_{i=1}^M$ with data vector $\bs f^M = (f^i)_{i=1}^M$, characteristic functions $\phi_k$, regularization parameter $\lambda$, initialization $\bs\Sigma^{(0)}$\\
\textbf{Outputs}: TSK approximation $f^*$, TSK factors $\bs\Sigma^*$
\end{flushleft}
    \begin{algorithmic}[1]
    \State Set initial TSK factors to ${\bs  \Sigma}^{(0)}$ 
    \State Precompute, for arbitrary TSK factors $\bs \Sigma$, the kernel matrix by
    \[(\bs K_{\bs \Sigma,M})_{i,j} = \prod_{k=1}^d\big(1-\Sigma_k + \Sigma_k\phi_k(x_k^i - x_k^j)\big)\]
    \State Compute minimizer $\bs\Sigma^* = \argmin_{\bs\Sigma} g_M({\bs  \Sigma}) = (\bs f^M)^\top \bs K_{{\bs  \Sigma},M}^{-1}\bs f^M$
    \State Solve $f^* = \argmin_{\hat f \in {\mc H}_{\bs \Sigma}} \frac{1}{M} \sum_{i=1}^M \big(\hat f(\bs x^i) - f^i\big)^2 + \lambda \norm{\hat f}_{\bs \Sigma}^2$
    \end{algorithmic}\label{alg:tsk}
\end{algorithm}
We must address some numerical concerns that can cause the algorithm to fail.

\subsubsection*{Reparameterization}
The optimization takes place over TSK factors $\bs\Sigma\in(0,1)^d$. To ensure that the iterates remain within this admissible region, we use a sigmoid reparameterization,
\begin{equation*}
    \Sigma_k = s(z_k) = \frac{1}{1+\e^{-z_k}}, \qquad z_k\in\mathbb{R}.
\end{equation*}
The optimization is then performed over the unconstrained variables $\bs z\in\mathbb{R}^d$. This prevents optimization iterates from leaving the region where the TSK factors are defined. Centering and scaling the input data also improves the numerical behavior of the kernel matrix. For kernels like the Gaussian kernel, differences in the magnitudes of the inputs can cause excessively small or disparate kernel values. Centering and scaling ensure the data are not spaced too far apart.

\subsubsection*{Initialization matters}
Another issue is that optimization can stall when the input dimension is high.
This results from how the input dimension impacts the structure of the kernel matrix in~\eqref{eq:g_M}.
Recall the kernel matrix has entries
\begin{equation*}
    (\bs K_{\bs\Sigma,M})_{i,j}
    =
    \prod_{k=1}^d
    \big(1-\Sigma_k+\Sigma_k\phi_k(x_k^i-x_k^j)\big).
\end{equation*}
Each entry is a product of convex combinations. Since each $\phi_k$ is a characteristic function, the diagonal entries are always equal to $1$. The off-diagonal entries are a different story, as they correspond to pairs of distinct data points. Consider the standard kernel, when all TSK factors equal $1$, whose entries are just products of characteristic functions with $|\phi_k|\leq 1$. For some characteristic functions commonly used in kernel learning, like the Gaussian characteristic function, the value decays to $0$ as the distance between the two data points increases. Off-diagonal entries for the standard kernel are then products of $d$ values with magnitude less than $1$. Larger input dimensions $d$ accelerate these products towards $0$. The result is a kernel matrix that, numerically speaking, approaches the identity matrix. This phenomenon can occur for TSK factor values near $1$, too, such as $0.9$ or $0.95$.

The same phenomenon affects the gradient of the finite-data objective. The gradient is given by entries
\begin{equation*}
    \frac{\partial g_M}{\partial \Sigma_\ell}
    =
    -\bs\alpha^\top
    \frac{\partial \bs K_{\bs\Sigma,M}}{\partial\Sigma_\ell}
    \bs\alpha,\quad 
    \frac{\partial(\bs K_{\bs\Sigma,M})_{i,j}}
    {\partial\Sigma_\ell}
    =
    \big(\phi_\ell(x_\ell^i-x_\ell^j)-1\big)
    \prod_{k\neq\ell}
    \big(1-\Sigma_k+\Sigma_k\phi_k(x_k^i-x_k^j)\big).
\end{equation*}
The diagonal entries of these derivative matrices are $0$. When the TSK factors are near $1$, the off-diagonal entries can also become numerically close to $0$ in high dimensions because they contain the same type of product as the kernel matrix. The gradient of $g_M$ can therefore become numerically small even when the TSK factors are far from the minimizer. The effect is that, numerically speaking, the function $g_M(\bs\Sigma)$ appears nearly constant in the region of near-$1$ TSK factors, and optimization stalls.

One must be careful about where to initialize the optimizer. We initialize the TSK factors away from~$1$. Unless otherwise stated, we take
\begin{equation*}
    \Sigma_{k}^{(0)}=0.2,\qquad k=1,\ldots,d.
\end{equation*}
This appears to mitigate the vanishing of the off-diagonal entries of the kernel matrix and its derivatives. The corresponding initialization for the unconstrained optimization variables is
\begin{equation*}
    z_{k}^{(0)}=\log\Big(\frac{\Sigma_{k}^{(0)}}{1-\Sigma_{k}^{(0)}}\Big).
\end{equation*}

\section{Numerical experiments}\label{sec:numerics}
The selected numerical experiments display how TSKs exploit multivariable structure. 
 To optimize TSK factors, we initialize each one at a value of $\Sigma_{k}^{(0)}=0.2$. Optimization uses the quasi-Newton L-BFGS method. 
The exact gradient is supplied. This methodology holds for all experiments. All validation sets use $10^4$ points and the same distribution as the training data.

We compare our method TSK with the standard tensor product kernel 
\begin{equation}
\kappa_{\bs 1}(\bs x,\bs y)=\prod_{k=1}^d\phi_k(x_k-y_k),
\end{equation}
and the ANOVA kernel~\eqref{equ:anova_kernel} with the same one-dimensional kernels as for TSK. Additionally, for the Gaussian kernels we also compare with ARD, described in~\eqref{eq:ard_objective}.

\subsection{Case study: Sobol' g-function}
The Sobol' g-function, from~\cite{Saltelli10}, appears frequently as a sensitivity analysis benchmark. It typically takes 8 inputs
\begin{equation}\label{equ:gfcn}
f(\bs{x})=\prod_{k=1}^8 \frac{|4x_k-2|+a_k}{1+a_k}, 
\end{equation}
where $\bs x\sim\mathcal{U}([0,1]^8).$ The coefficient $a_k$ determines the importance of the corresponding input. A greater value of $a_k$ decreases the importance of the input $x_k$: $x_k$ is very important when $a_k\approx 0$, is of lesser importance when $a_k\approx 9$, and makes almost no impact when $a_k\approx 99$~\cite{Marrel08}. This control over the input structure makes the g-function an ideal case study. We choose three sets of coefficients $\bs a_{\text{high}}$, $\bs a_{\text{med}}$ , and $\bs a_{\text{low}}$ to create high, medium, and low complexity input interaction structures, 
\begin{align*}
\bs a_{\text{high}} = &\left[\begin{array}{cccccccc}
0 & 0 & 1 & 1 & 2 & 2 & 3 & 3
\end{array}\right], \quad &\text{type A}, \\
\bs a_{\text{med}} =& \left[\begin{array}{cccccccc}
0.5 & 0.5 & 7& 7 & 7 & 7 & 7 & 7
\end{array}\right], \quad &\text{type B},\\
\bs a_{\text{low}} =& \left[\begin{array}{cccccccc}
1 & 2 & 5 & 10 & 20 & 50 & 100 & 500
\end{array}\right], \quad &\text{type C}.
\end{align*}
The case study demonstrates how TSKs identify and take advantage of structurally low complexity functions. A function is easy or difficult to approximate depending on its ANOVA decomposition. This is formalized with notions of type A, B, and C functions in~\cite{kucherenko2011identification}. A function is complicated (or type A) and challenging to approximate, when no inputs are unimportant and all interactions, low-order to high-order, are important. A function of medium complexity (or type B) has no non-influential inputs, but is dominated by main effects and lower-order interaction terms. The low complexity (or type C) case describes functions where some inputs are unimportant and only lower-order interactions and main effects have influence. Each set of coefficients $\bs a_{\text{high}}$, $\bs a_{\text{med}}$, and $\bs a_{\text{low}}$ falls into either type A, B, or C. \par
We run Algorithm~\ref{alg:tsk} with $M=10^3$ samples to create kernel approximations of each case of the g-function~\eqref{equ:gfcn}. Because of the nonsmoothness of the g-function, the kernels use the characteristic function $\phi(t) = \e^{-|t|}$, which corresponds to the Cauchy distribution. 
 \begin{figure}[h!]
      \centering
      \includegraphics[width=.32\linewidth]{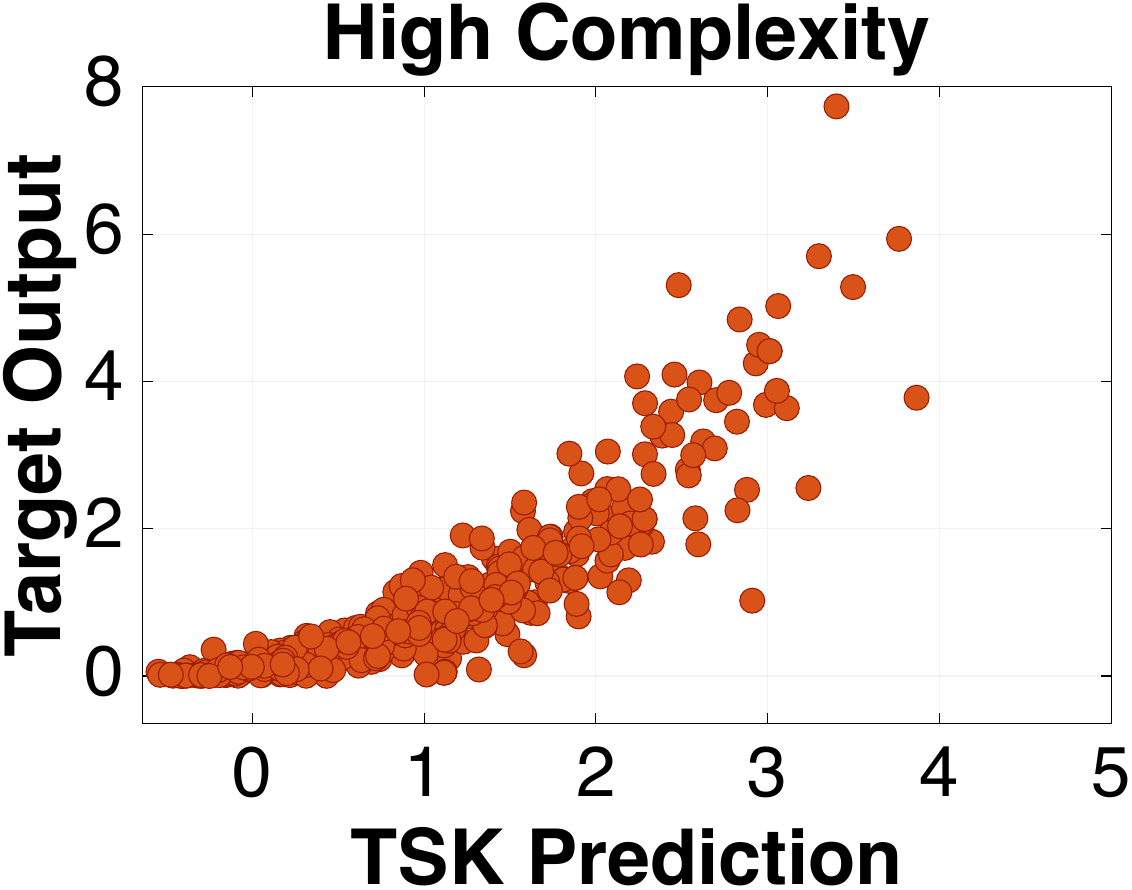}
      \includegraphics[width=.32\linewidth]{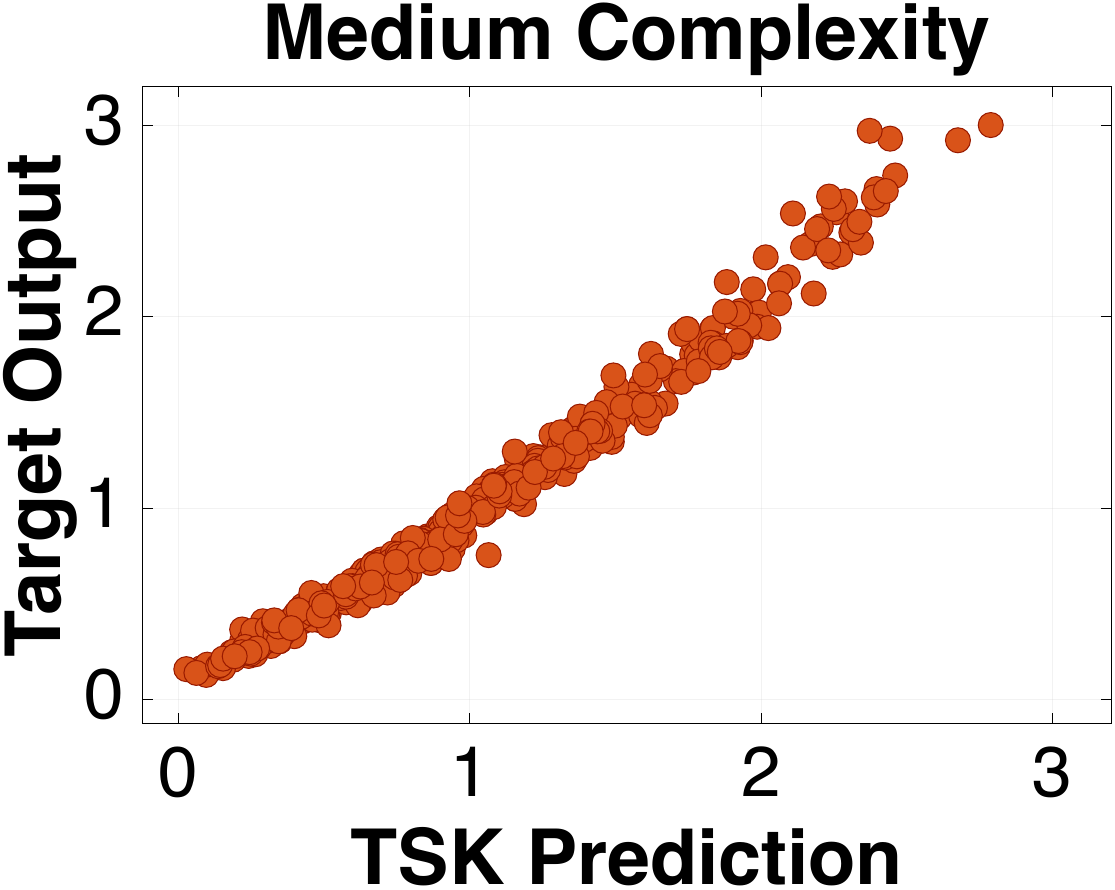}
          \includegraphics[width=.32\linewidth]{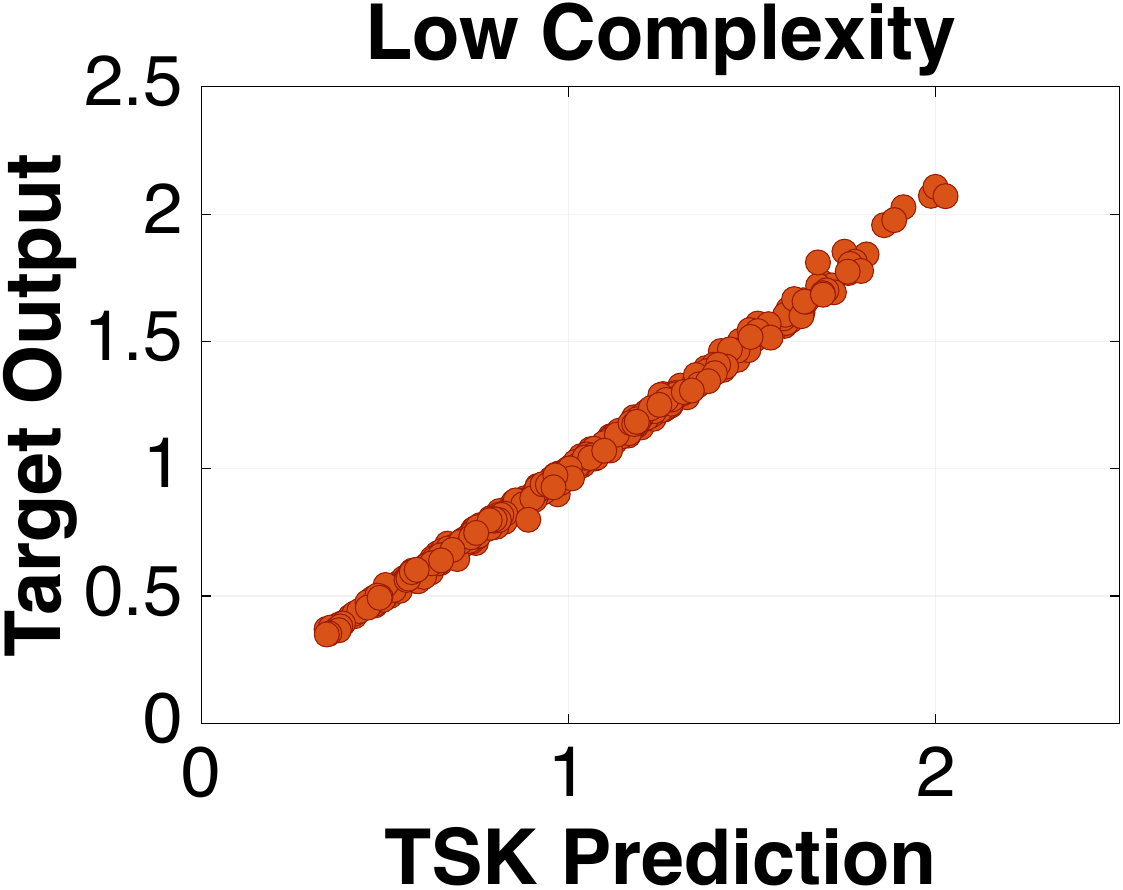}
                \includegraphics[width=.32\linewidth]{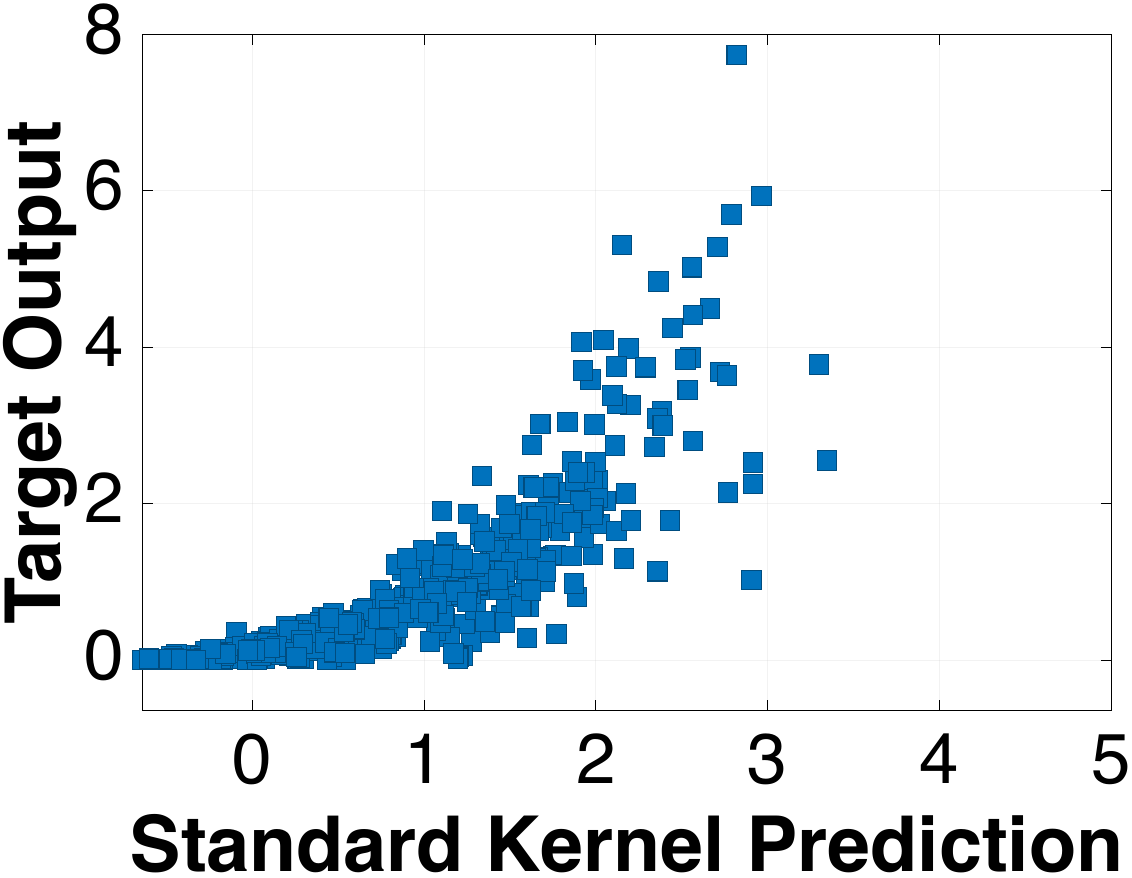}
      \includegraphics[width=.32\linewidth]{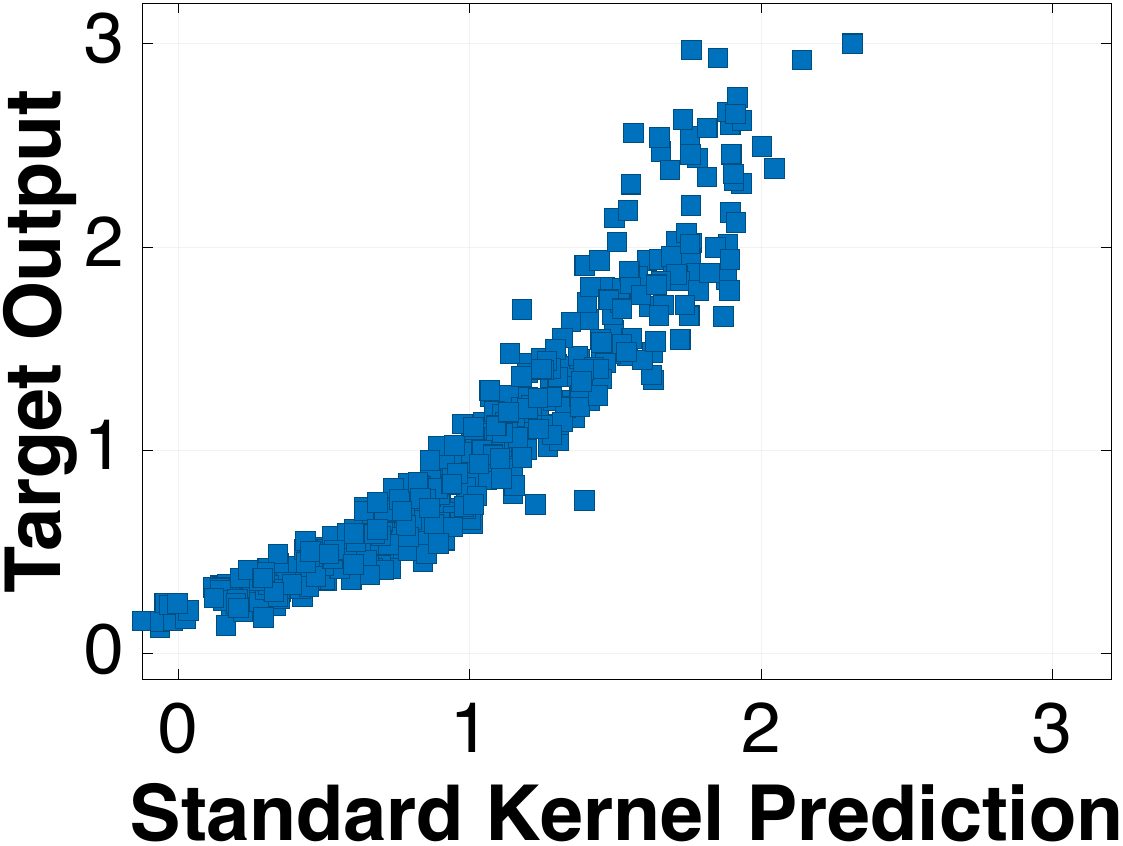}
          \includegraphics[width=.32\linewidth]{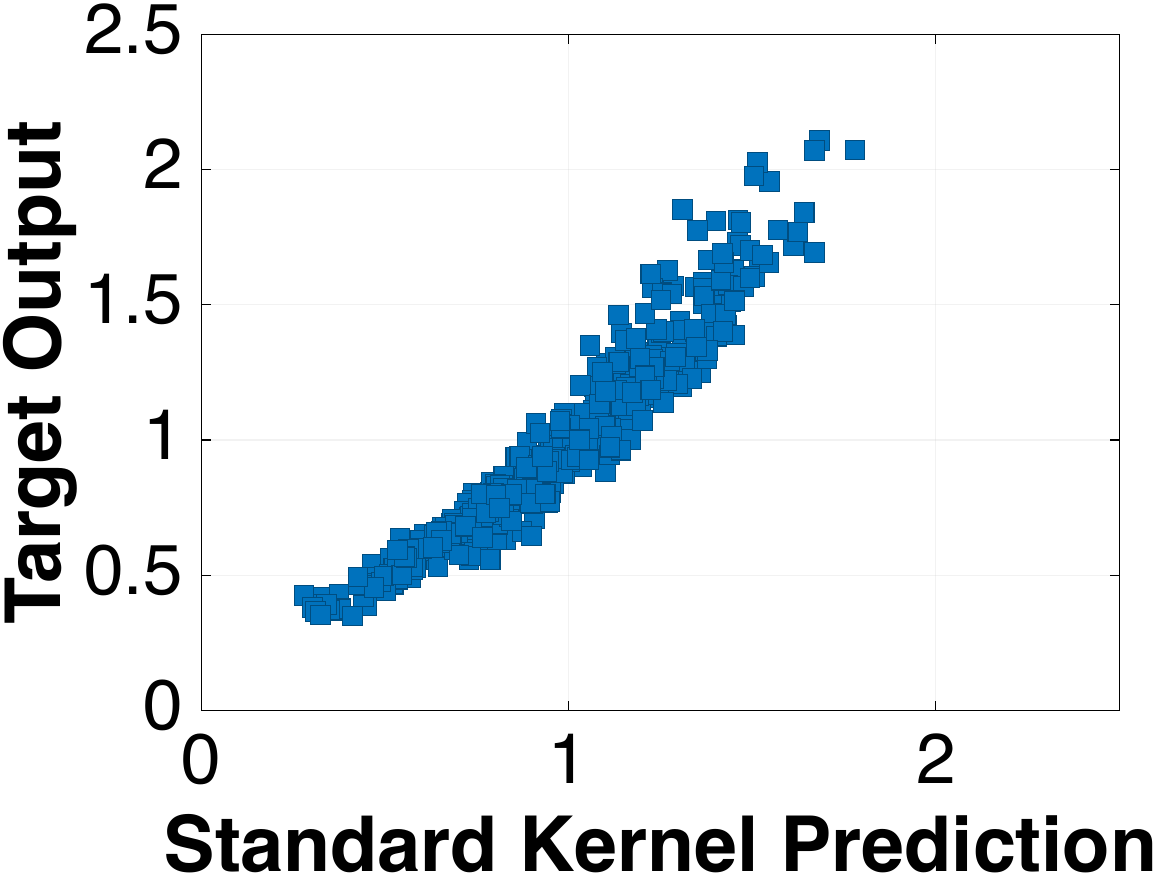}
                          \includegraphics[width=.32\linewidth]{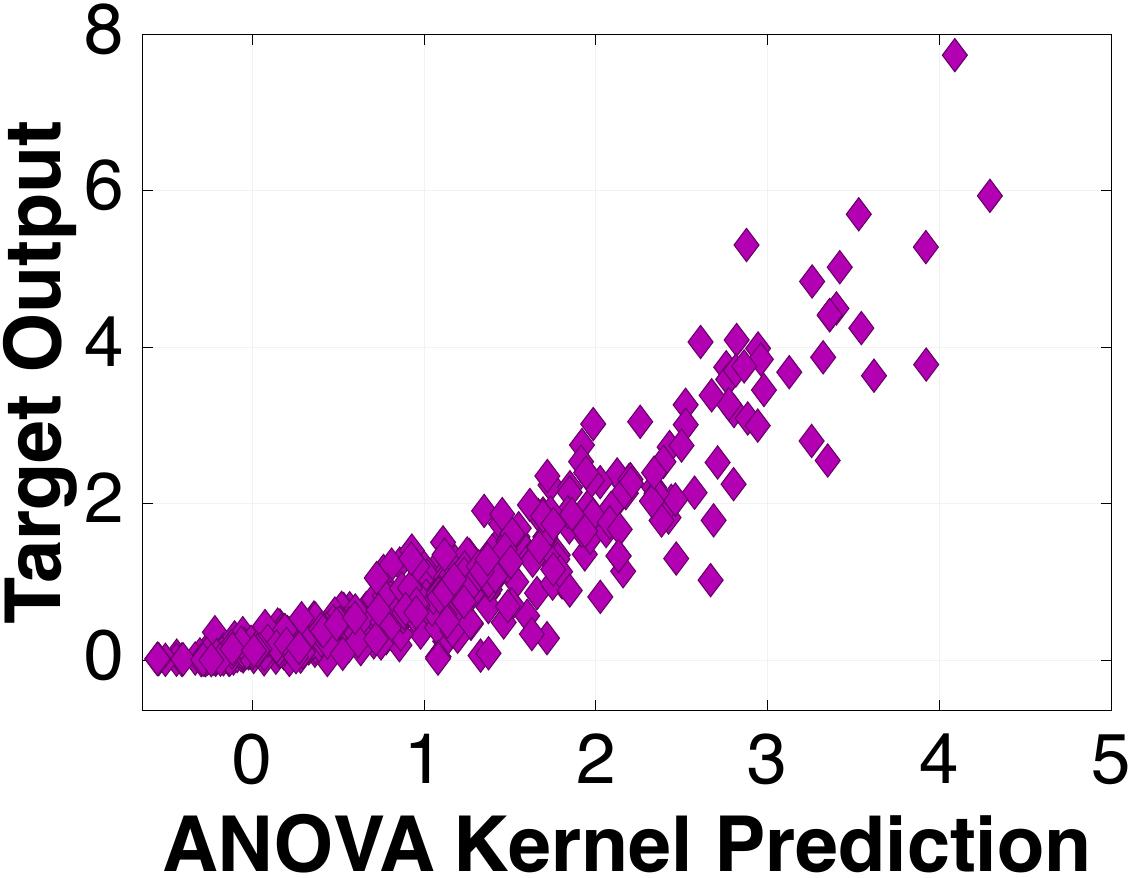}
      \includegraphics[width=.32\linewidth]{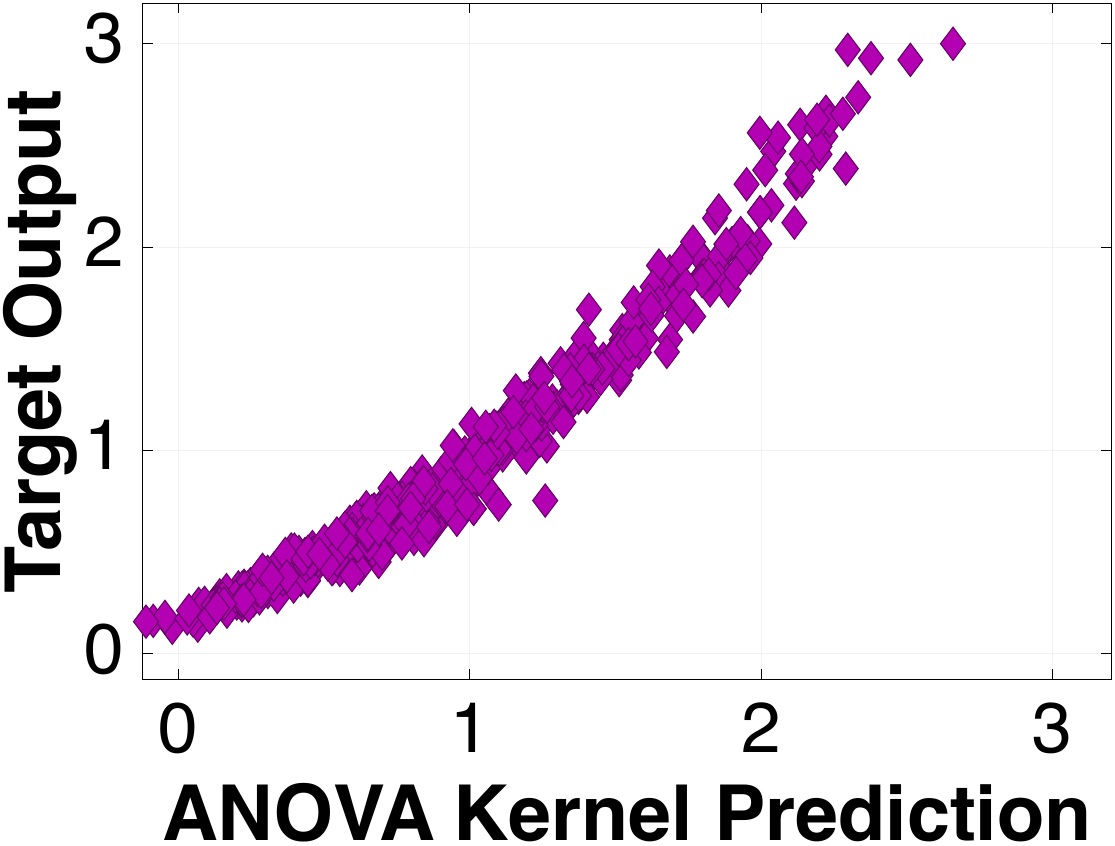}
 \includegraphics[width=.32\linewidth]{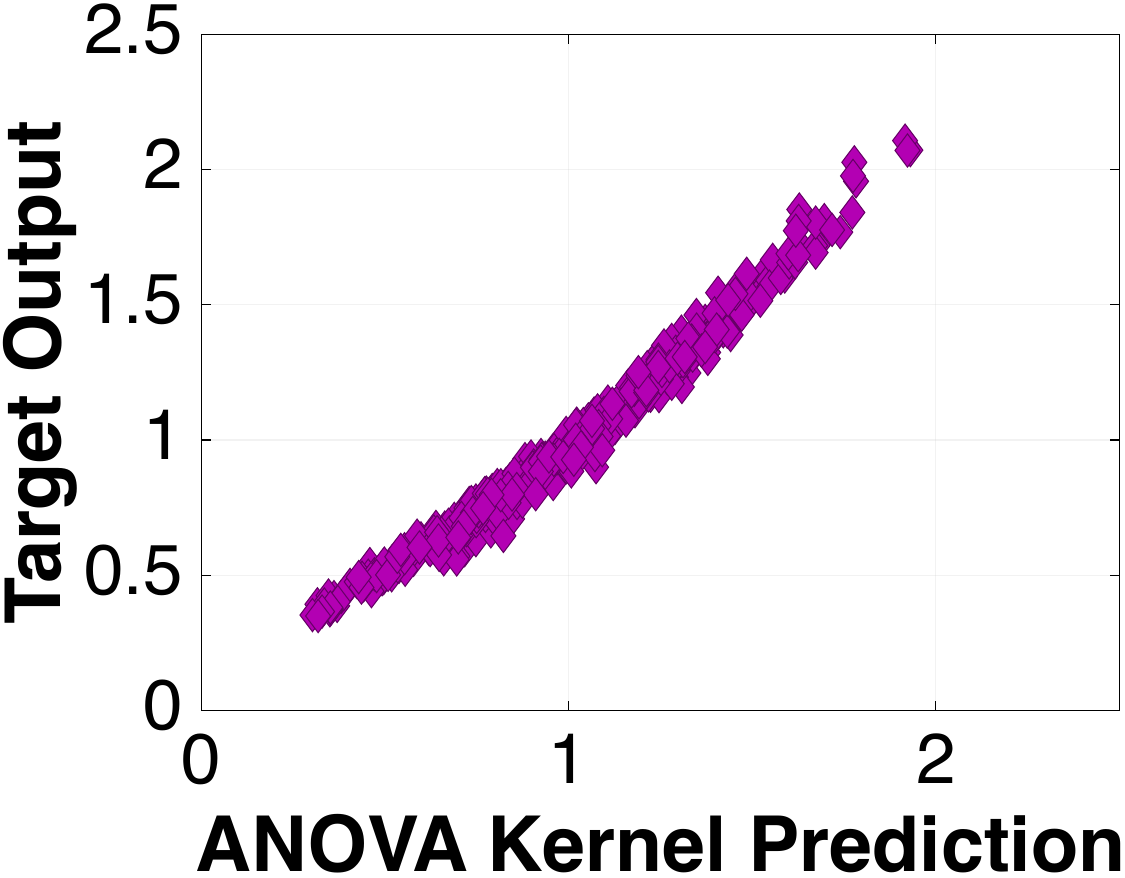}
            \includegraphics[width=.32\linewidth]{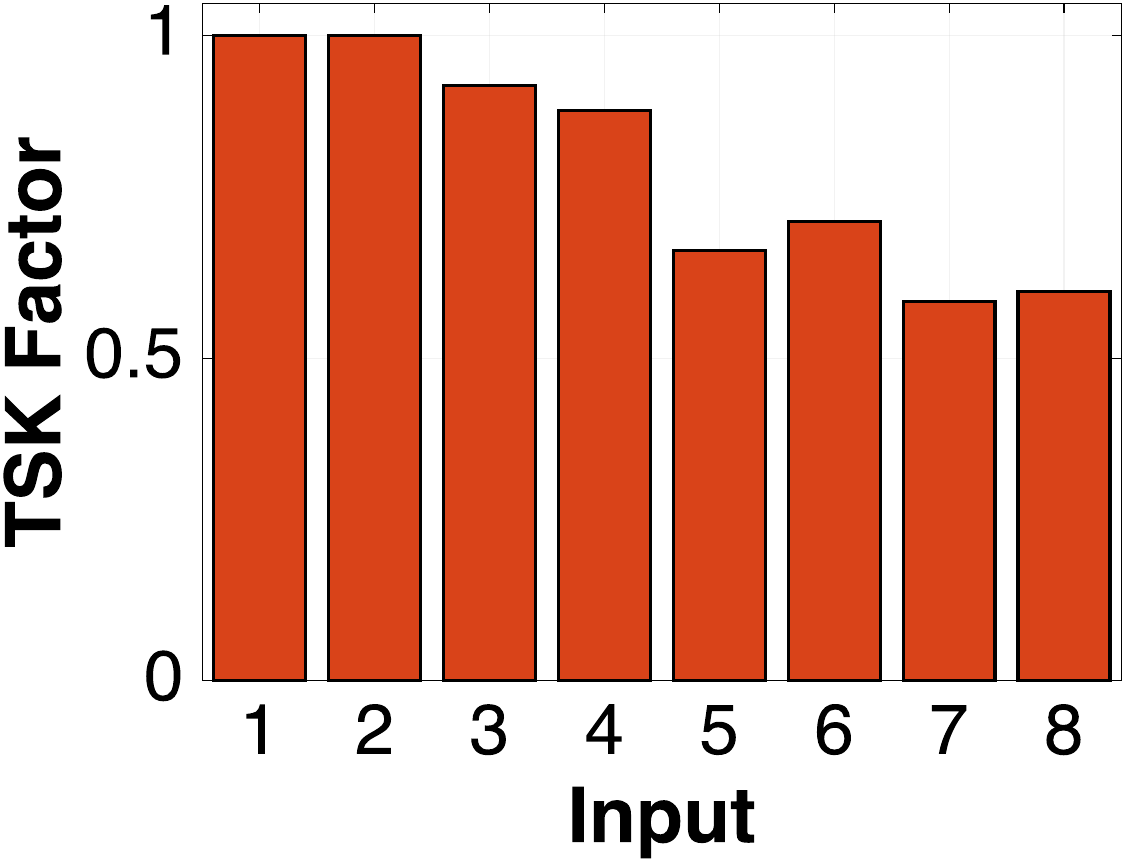}
      \includegraphics[width=.32\linewidth]{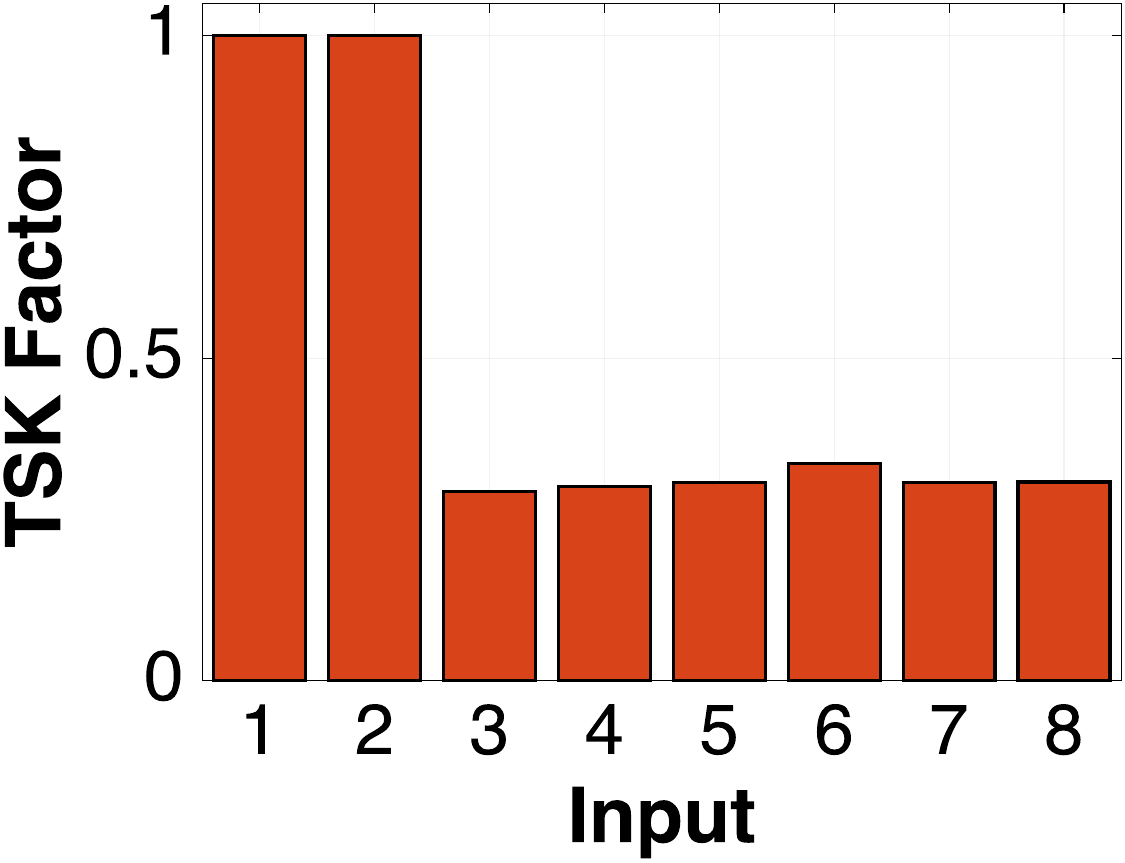}
        \includegraphics[width=.32\linewidth]{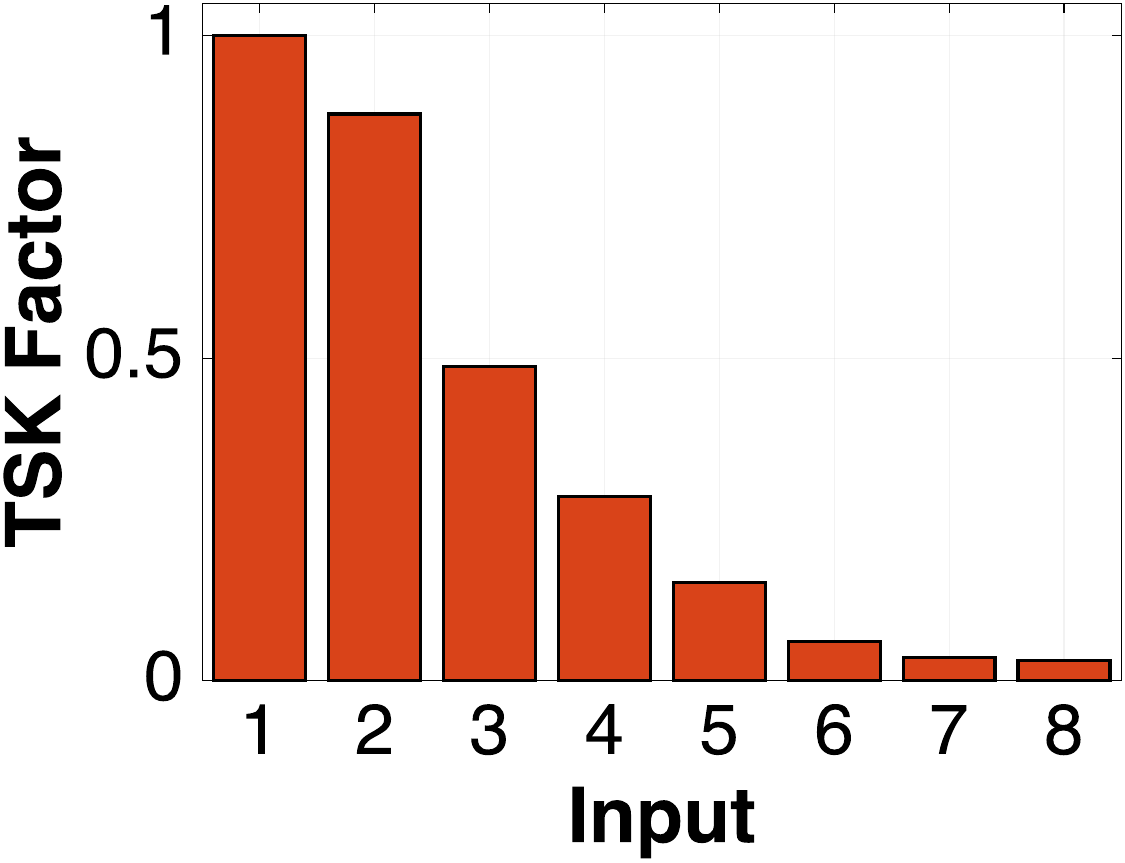}  
      \caption{Results for approximation of the g-function~\eqref{equ:gfcn} with TSK (top row), standard product kernels (second row), and unweighted ANOVA kernels (third row). The left, middle, and right columns correspond to high, medium, and low complexity structured functions, respectively. We show correlations between the kernel predictions and true outputs for $500$ of the $10^4$ validation points. Strong correlation, and more accurate prediction, occurs when the scatter plot is closer to a line with slope one. The bottom row  presents the TSK factors taken after optimization.}
      \label{fig:gfunc}
  \end{figure}
The results for all cases appear in~\cref{fig:gfunc}. Scatter plots show the true output values versus what the kernel predicts for each point in the validation set. We present the optimized TSK factors in bar graphs.\par
  
Root mean squared error (RMSE) and root relative squared error (RRSE) are evaluated on a validation set of $10^4$ points sampled via LHS. The errors are given in Table~\ref{tab:gfunc_err}. 
\begin{table}[h!]
    \centering
    \begin{tabular}{|cc|c|c|c|} 
    \hline
   Complexity & Error Type & TSK & Standard Ker. & ANOVA Ker.\\ \hline
   High & RMSE & 0.565 & 0.671 & $\mathbf{0.507}$ \\
     & RRSE & $0.367$ & 0.437 & $\mathbf{0.330 } $
      \\ \hline
     Medium & RMSE & $\mathbf{9.570\times 10^{-2}}$ & $0.233$ & $0.139$  \\
     & RRSE & $\mathbf{8.166\times 10^{-2}}$ & $0.199$ & $0.119$ \\ \hline
     Low & RMSE & $\mathbf{2.571\times 10^{-2}}$ & $0.128$ & $6.008\times 10^{-2}$ \\
     & RRSE& $\mathbf{2.412\times 10^{-2}}$ & 0.120 & $5.637\times 10^{-2}$  \\ \hline
    \end{tabular}
    \label{tab:gfunc_err}
    \caption{Root mean squared error (RMSE) and root relative squared error (RRSE) for kernel machines of the TSK, standard kernel, and ANOVA kernel on a validation set. Validation sets consist of $10^4$ data points for the high, medium, and low complexity g-function.}
\end{table}

\raggedbottom
In each complexity case, TSK shows better performance than the standard kernel and performs roughly similar to or better than the ANOVA kernel. Decreasing complexity leads to better performance regardless of the kernel used. The least complex function is easier to approximate. However, TSK appears better able to take advantage of decreasing complexity. In the high complexity case, the performance gap is smaller. We see the optimal TSK factors in~\cref{fig:gfunc} (left column) identify all inputs as important, as all factors take values tending towards 1 rather than 0. The optimized kernel is not so different from the standard kernel. Visually, in~\cref{fig:gfunc} (left column), the TSK predictions do not correlate much better with the true values than those of the standard kernel. The situation changes with the medium complexity case where the TSK performs twice as well as the standard kernel. There is a starker difference in correlation of the predictions in~\cref{fig:gfunc} (middle column). There is also more disparity in the TSK factors, which aligns with the choice of $\bs a_{\text{med}}$. For the low complexity case, the TSK error is less than one fourth of the standard-kernel error and half of the ANOVA-kernel error. In~\cref{fig:gfunc} (right column), the TSK predictions correlate almost perfectly. 

The experiment highlights how the optimized kernel performs better as the standard kernel across all cases. Key is how the performance of the optimized kernel $\kappa_{\bs \Sigma^*}$ improves over the standard kernel and ANOVA kernel when the function's interaction structure becomes simpler. This demonstrates how optimizing TSK factors allows the method to identify and take particular advantage of function structure. The ANOVA kernel performs better than the standard kernel by its greater expression of structure. Unlike TSK, the lack of weighting in the ANOVA kernel limits performance.

\subsection{Higher-dimensional examples}
The following experiments investigate how beneficial it is to adapt the kernel to the multivariable structure as the input dimension increases and the number of available function evaluations remains limited. The kernels are Gaussian, using the characteristic function $\phi(t) = \e^{-t^2/2}$, which corresponds to the standard normal distribution.
High dimensions present a particular difficulty for the standard product kernel. As discussed in~\cref{sec:alg}, high input dimension results in a kernel matrix $\kappa_{\bs 1}$ near the identity matrix. When evaluating on the validation set, we construct a prediction kernel matrix built from both the training and validation data. Since the training and validation sets almost surely do not share any data, the prediction kernel matrix can be nearly a zero matrix. The ANOVA and TSK kernels alter this product structure by including lower-dimensional component kernels. Use of Gaussian kernels enables comparison to automatic relevance determination (ARD), which can mitigate the effect by adapting coordinate-wise length scales. The experiments below examine these different mechanisms on functions possessing varying forms of multivariable structure.
 We study four examples, two of which have smooth analytic expressions.
\subsubsection*{100D function}
The 100D function is a benchmark studied in~\cite{Luthen21}. The model's expression is
\begin{equation}\label{equ:100d}
    f(\bs x) = 3 +\frac{1}{100}\sum_{k=1}^{100} k\big(x_k^3-5x_k+\frac{1}{3}\log(x_k^2+x_k^5)\big) + x_1x_2^2 + x_2x_4 -x_3x_5+x_{51} +x_{50}x_{54}^2,
\end{equation}
where $x_{20}\sim\mathcal{U}([1,3])$ and $x_k\sim\mathcal{U}([1,2])$ for $k\neq 20$. 

Previous results of variance-based global sensitivity analysis show that inputs $x_2, x_{20}, x_{51},$ and $x_{54}$ are distinctly more important than the remaining inputs. Among the remaining inputs, input importance tends to increase as the input index increases, while $x_1,x_3,x_4,x_5,$ and $x_{50}$ also have some importance. The 100D function's expression in~\eqref{equ:100d} explains this. Importance scales with the input index because the summand terms are multiplied by the input index $k$. Inputs  $x_1, x_2,x_3,x_4,x_5, x_{50},x_{51},x_{54}$ stand out in importance because they appear in extra terms outside the sum. The input $x_{20}$ has special importance because its distribution is unique and wider than that of the other inputs. While all 100 inputs appear in the expression, the 100D function still has a low complexity structure because it features only main effects and second-order interactions.

 \begin{figure}[tbp]
      \centering
      \includegraphics[width=.4\linewidth]{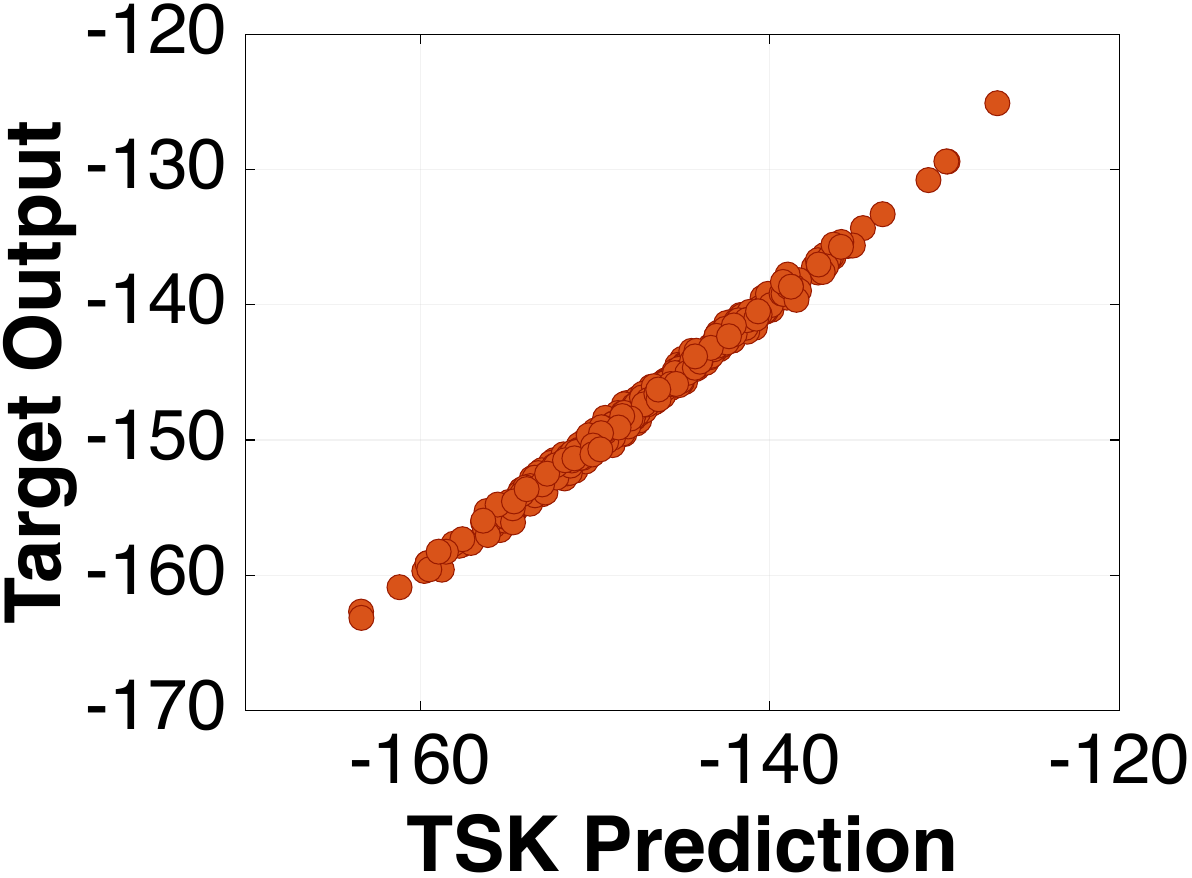}
      \includegraphics[width=.4\linewidth]{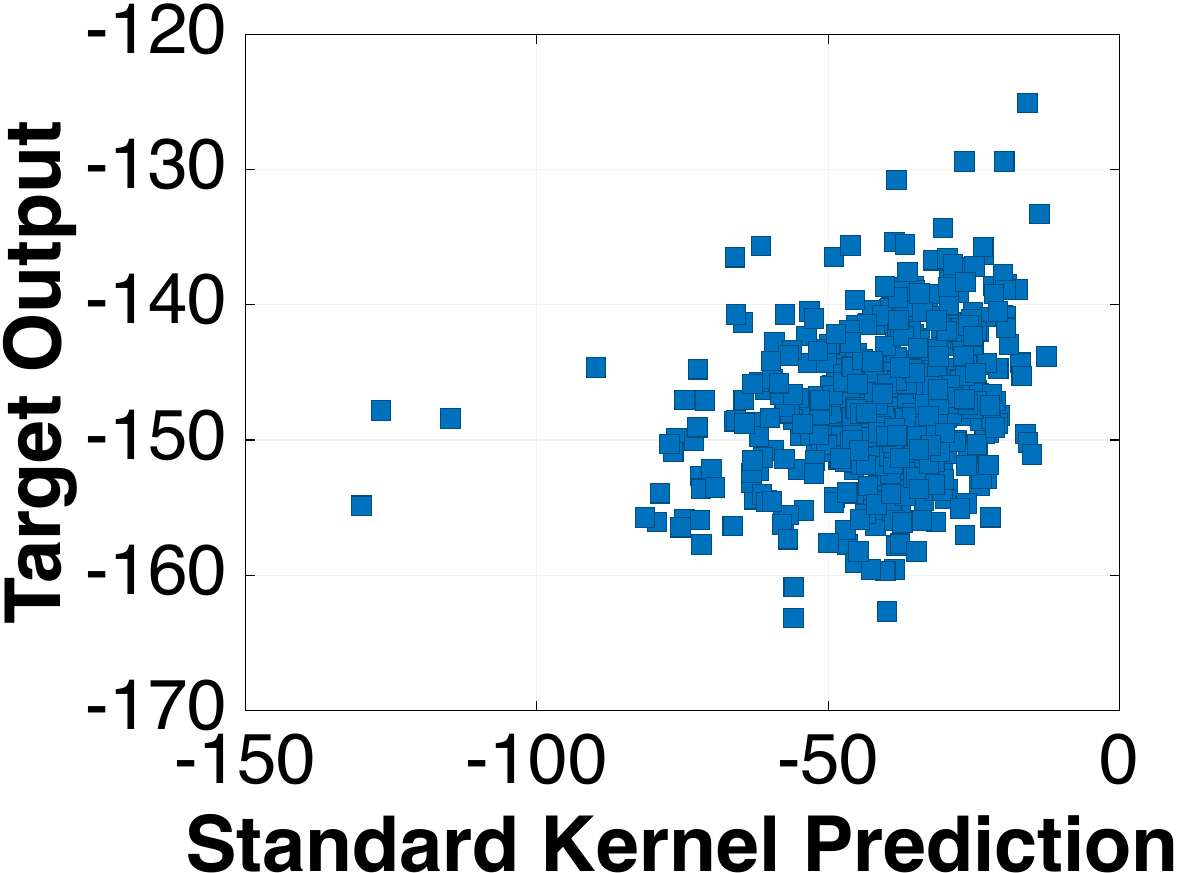}
       \includegraphics[width=.4\linewidth]{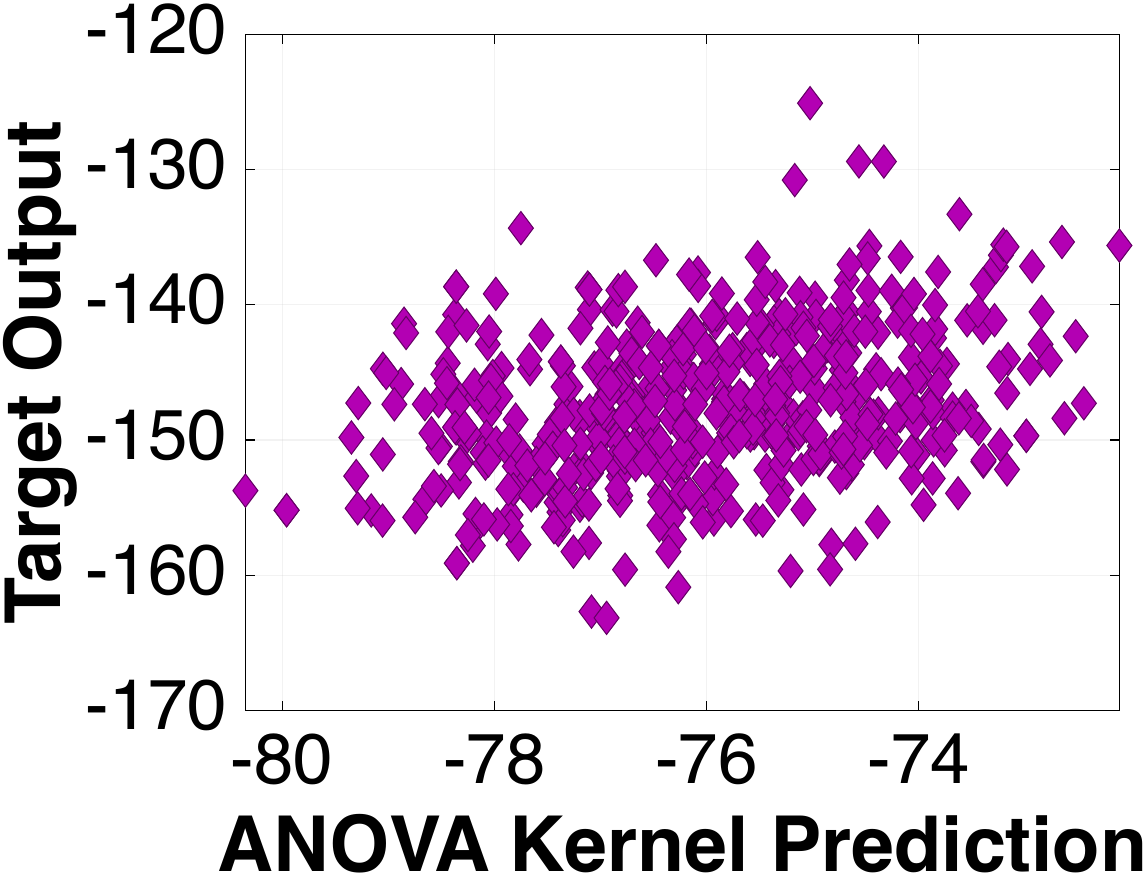}
      \includegraphics[width=.4\linewidth]{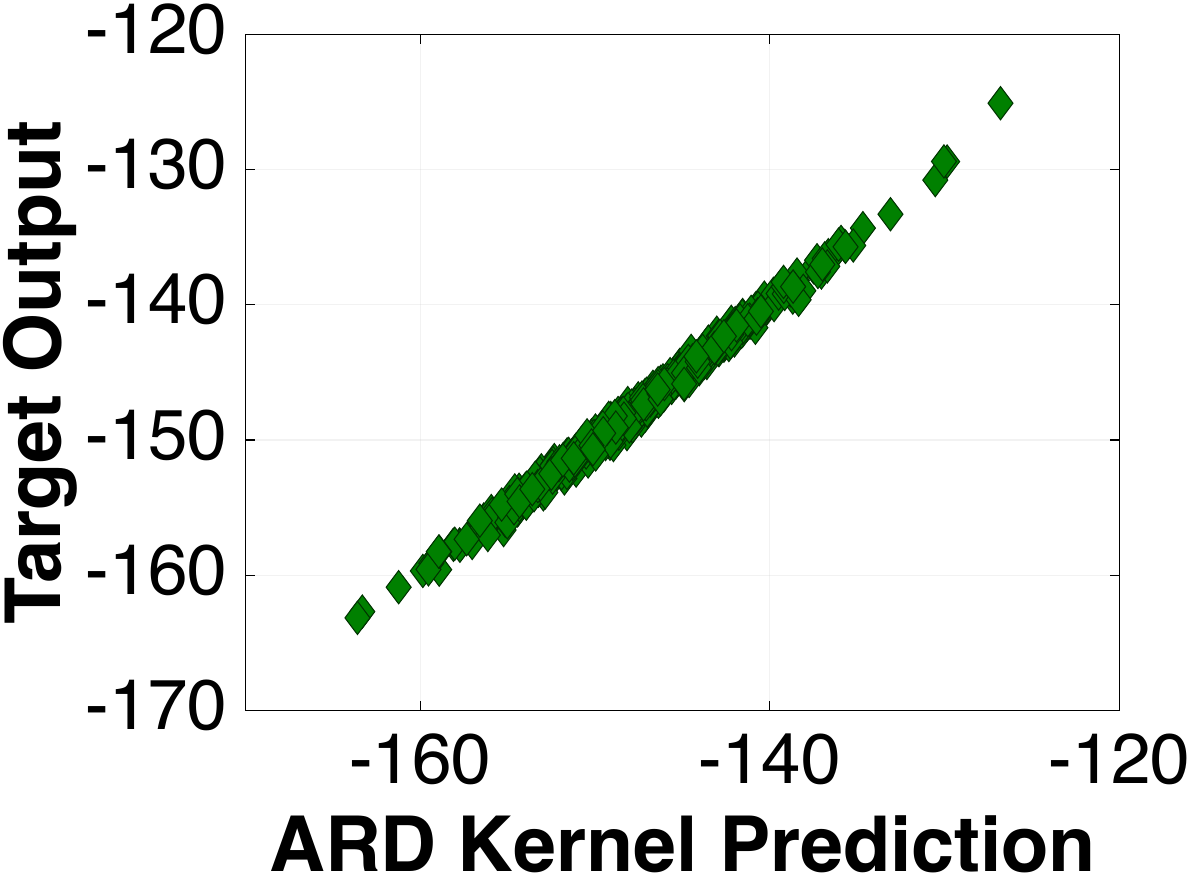}
        \includegraphics[width=.6\linewidth]{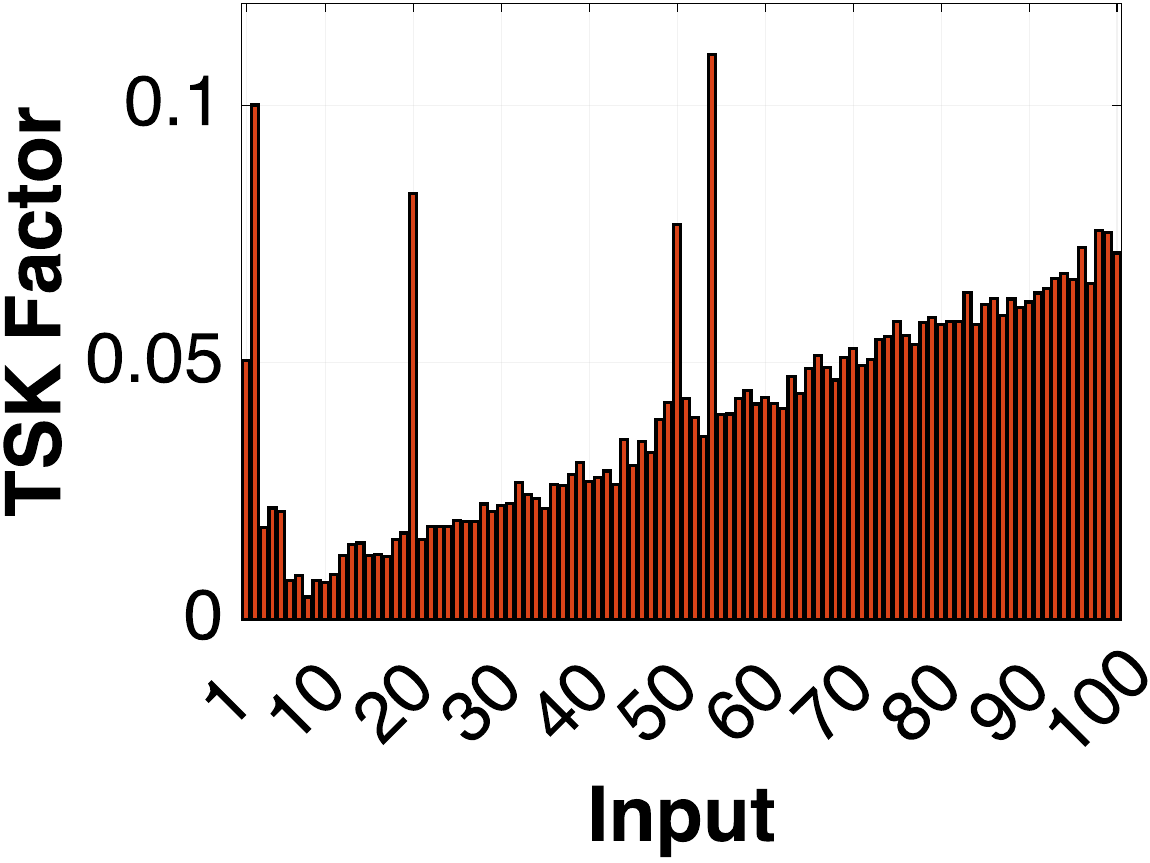}
      \caption{Results for approximation of the 100D function with  TSK, the standard product kernel, the unweighted ANOVA kernel, and ARD, trained on 1000 function evaluations. Top figures display kernel predictions versus true outputs on 500 out of the $10^4$ validation points. Strong correlation, and more accurate prediction, occurs when the scatter plot is closer to a line with slope one. The bottom figure presents the TSK factors taken after optimization.}
      \label{fig:100d}
  \end{figure}
 The scatter plots in~\cref{fig:100d} show how well the kernel predictions correlate with true output values. The errors in~\cref{tab:summ_err} show both TSK and ARD approximate the function accurately. In contrast, the standard kernels and unweighted ANOVA kernels are much less accurate.

 The optimal TSK factors in~\cref{fig:100d} recover meaningful structure. The TSK factors do not match Sobol' indices in value. The ranking of the inputs by the TSK factors matches the ranking given by the Sobol’ indices in~\cite{Luthen21}. This supports that the process of optimizing the TSK factors learns meaningful multivariable structure of the 100D function.

\subsubsection*{One-dimensional diffusion model}
This benchmark is a one-dimensional stochastic diffusion problem. The problem first appeared as a benchmark in~\cite{oneddiffusion} and subsequently appeared in~\cite{Luthen21}. Consider the boundary value problem
\begin{equation}\label{equ:1ddiff}
\left.\begin{array}{ll}
          -\frac{d}{dz}\big(E(z)\frac{du}{dz}(z)\big) + h(z)= 0, & z\in(0,L) , \\
         \quad u(0) = 0,  \\
         \quad \frac{du}{dz}(L) = F.
         \end{array}\right.
\end{equation}
The diffusion term is a lognormal random field $E(z)=\exp\big(\lambda_E+\zeta_E\,G(z)\big)$, where $G(z)$ is a standard normal stationary Gaussian random field. Suppose we wish to determine how $u(L)$, the solution at the other boundary, depends on the realization of $G(z)$. We represent $G(z)$ through its Karhunen-Lo\`{e}ve (KL) expansion
\begin{equation*}
    G(z)=\sum_{k=1}^\infty \sqrt{\alpha_k} \varphi_k(z)\xi_k,\quad \xi_k\sim\mathcal{N}(0,1).
\end{equation*}
    The KL expansion is truncated to its first 62 terms to preserve 99\% of the random field's variance. This transforms the problem into a deterministic, finite-dimensional one. Define the inputs
    \begin{equation*}
        x_k=\xi_k,\quad k=1,\dots,62.
    \end{equation*}
    We aim to construct a surrogate for $f(\bs x)\coloneqq u(L;\bs x)$, the boundary value we get by solving \eqref{equ:1ddiff} for a realization of $\bs x$. 

    The training and validation sets come from an open-source data set created for the benchmark problem~\cite{oneddiff_data}.  The training set uses $M=600$ samples from the standard normal distribution. The $10^4$ validation samples are likewise samples from the standard normal distribution.
 \begin{figure}[tbp]
      \centering
      \includegraphics[width=.4\linewidth]{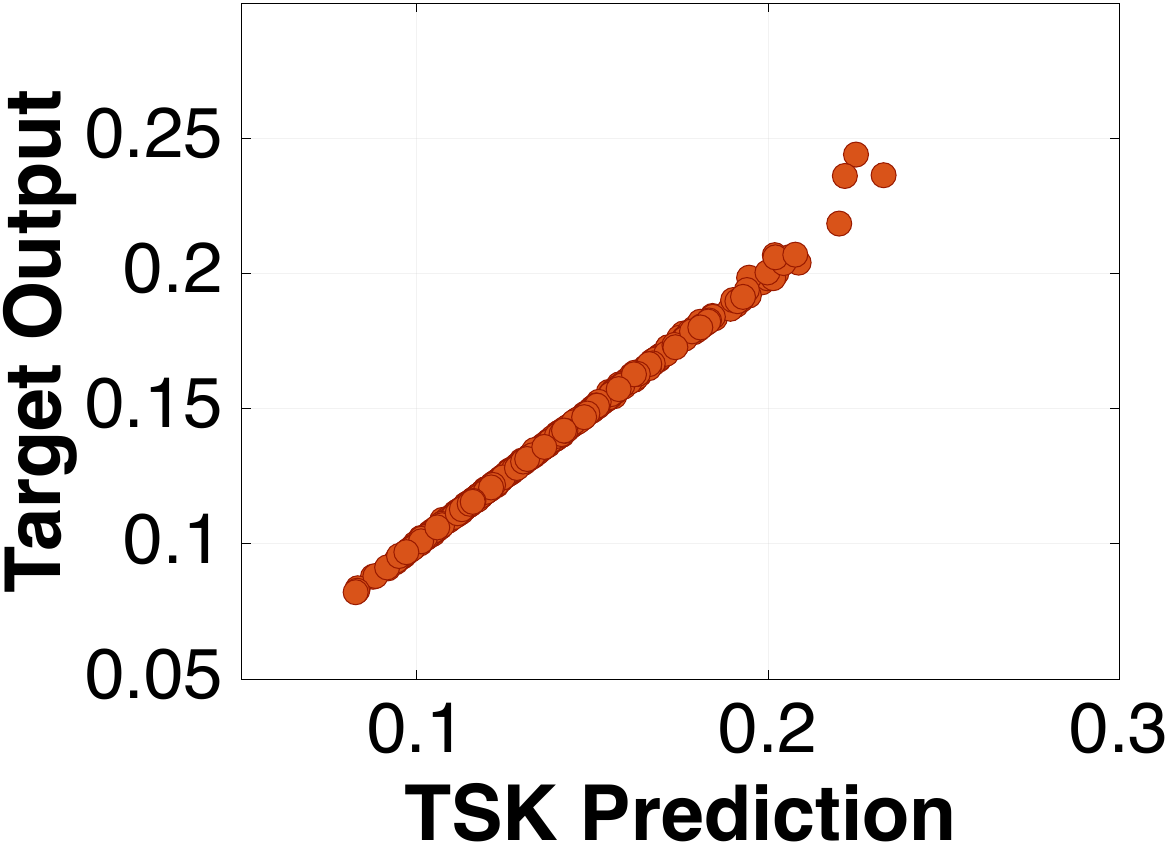}
      \includegraphics[width=.4\linewidth]{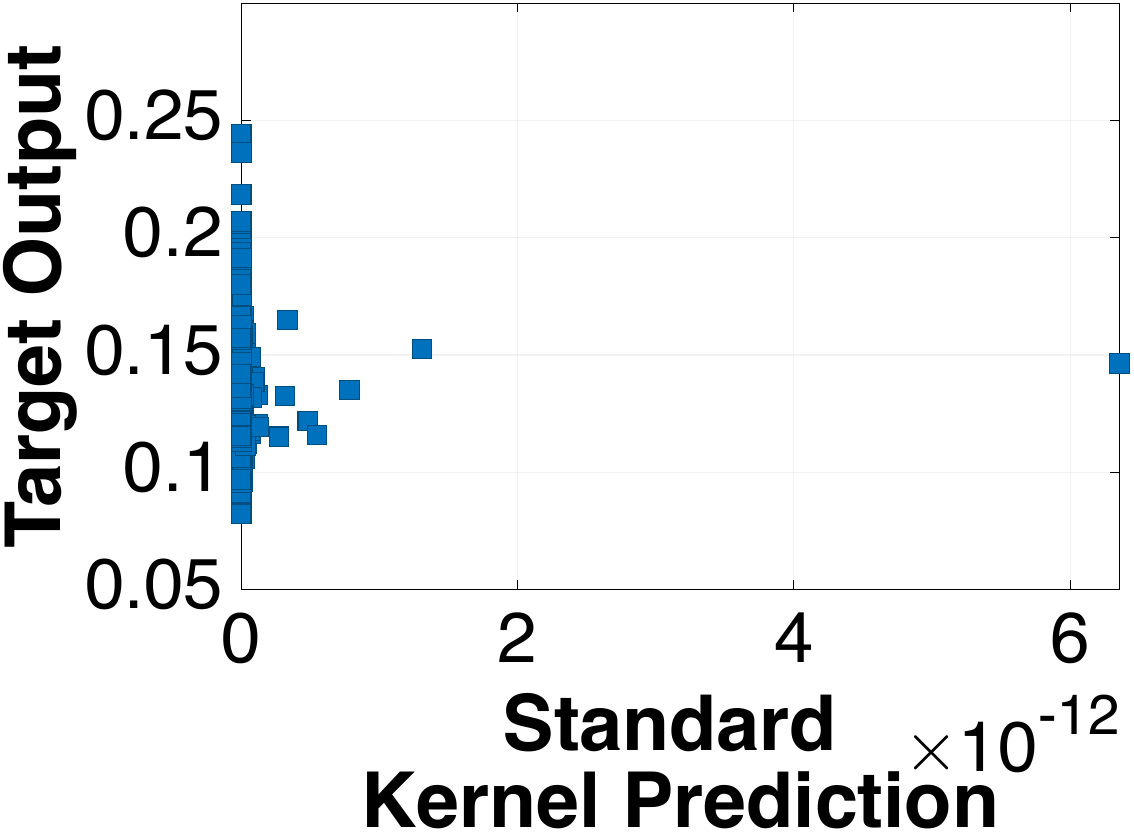}
            \includegraphics[width=.4\linewidth]{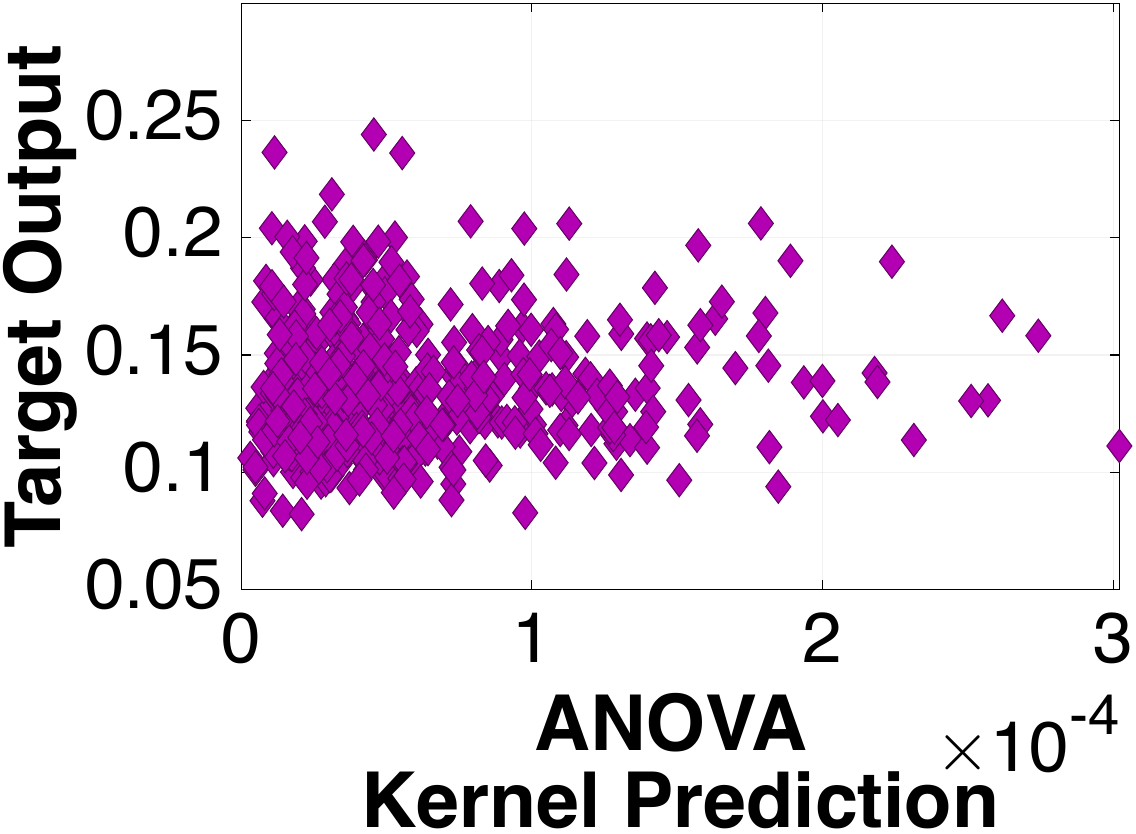}
      \includegraphics[width=.4\linewidth]{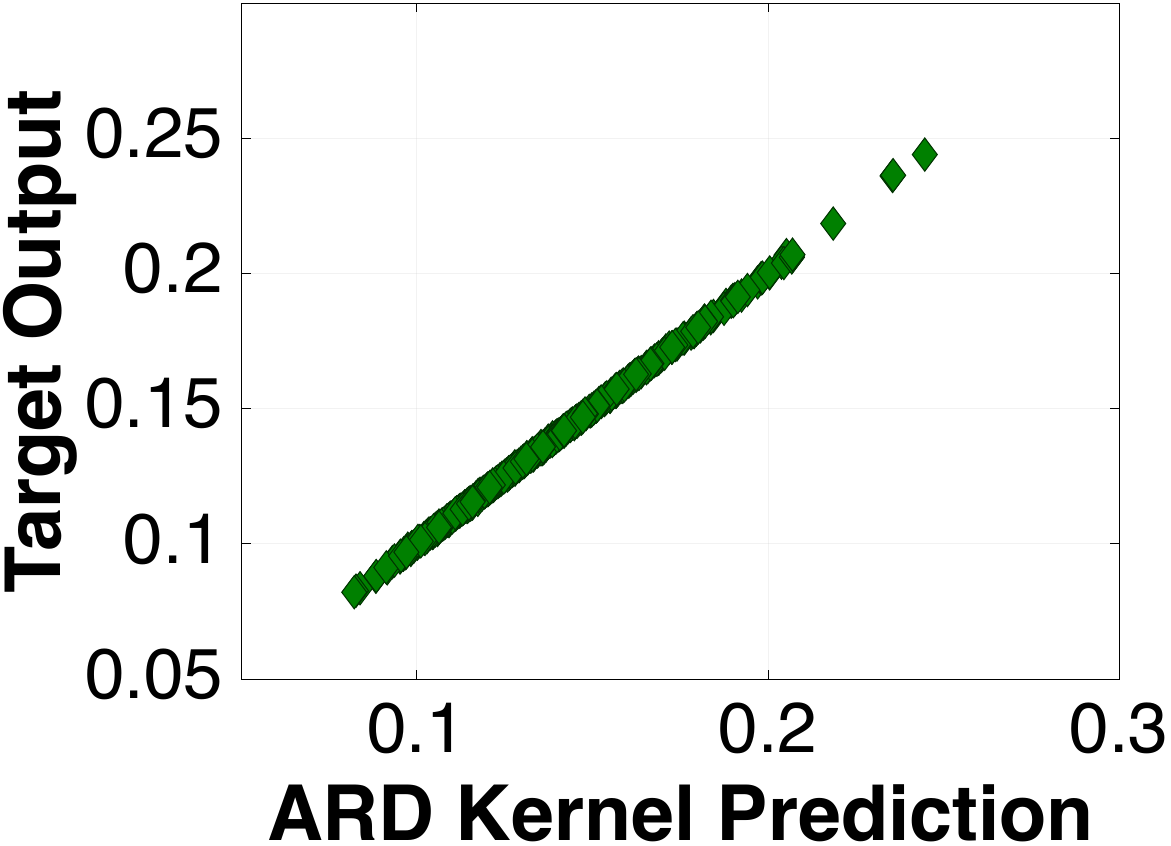}
        \includegraphics[width=.6\linewidth]{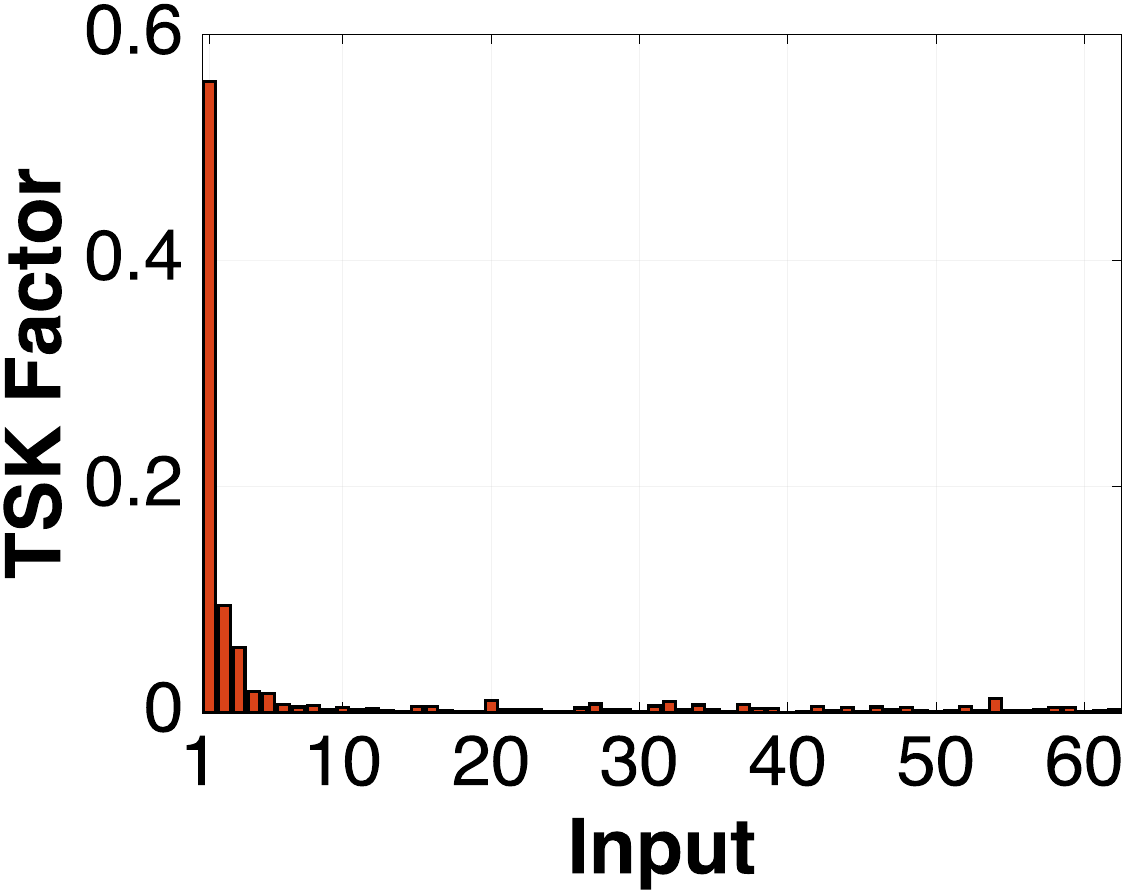}
      \caption{Results for approximation of the one-dimensional diffusion model with  TSK, the standard product kernel, the unweighted ANOVA kernel, and ARD, trained on 600 function evaluations. Top figures display kernel predictions versus true outputs on 500 out of the $10^4$ validation points. Strong correlation, and more accurate prediction, occurs when the scatter plot is closer to a line with slope one. The bottom figure presents the TSK factors $\bs\Sigma^*$ taken after optimization.}
      \label{fig:1d}
  \end{figure}
The results in~\cref{fig:1d}
demonstrate the importance of adapting the kernel to the strongly anisotropic structure of this problem. The standard product kernel gives predictions close to zero and an RRSE of approximately one. The unweighted ANOVA kernel performs similarly, indicating that introducing lower-dimensional ANOVA components without adapting their relative importance is insufficient in this example. Both TSK and ARD produce accurate approximations.

The optimized TSK factors indicate that only a relatively small subset of the $62$ KL coefficients contributes substantially to the quantity of interest. Thus, the improvement of TSK over the two fixed kernels is consistent with its ability to adapt to unequal input importance. 
Variable relevance adaptation through individual kernel length scales provides a particularly effective representation.

\subsubsection*{Modified Grienwank function}
The Grienwank function is an optimization benchmark~\cite{grienwank} structurally composed of polynomial main effect terms and a trigonometric interaction term. We modify the polynomial terms to make the inputs more unequal in importance.
\begin{equation}\label{equ:grienwank}
f(\bs x) = \frac{1}{4000}\sum_{k=1}^{40}\frac{x_k^2}{k^2}    - \prod_{k=1}^{40}\cos\Big(\frac{x_k}{\sqrt{k}}\Big) +1,\quad \bs x\sim\mathcal{U}([-600,600]^{40}).
\end{equation}
In terms of multivariable structure, the importance of the polynomial main effects decreases with the input index. The trigonometric product introduces non-additive dependence across all 40 inputs and, when viewed through an ANOVA decomposition, generates interaction components of multiple orders. The resulting function therefore combines strongly anisotropic input importance with richer interaction structure than the preceding high-dimensional examples. In this respect, it is more similar to the g-function.

 The training set uses $M = 10^3$ samples from the uniform distribution $\mathcal{U}([-600,600]^{40})$. 
 \begin{figure}[tbp]
      \centering
      \includegraphics[width=.4\linewidth]{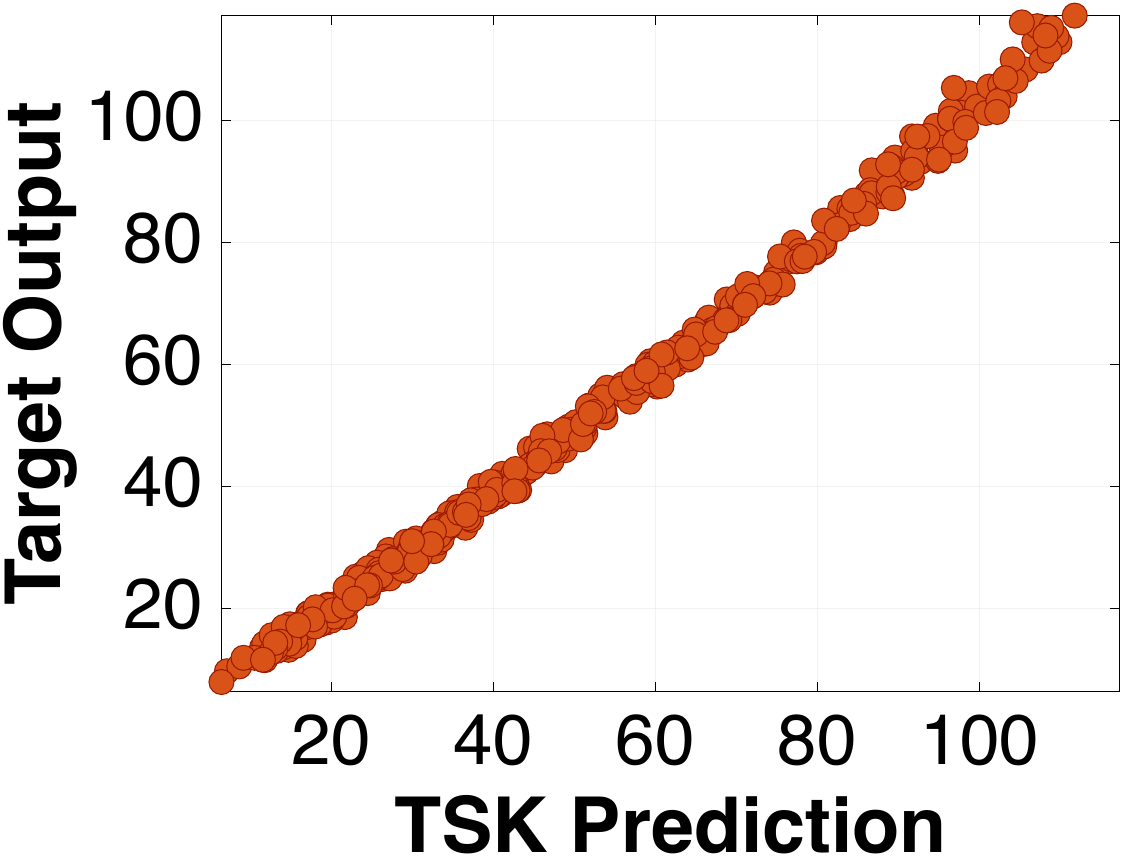}
      \includegraphics[width=.4\linewidth]{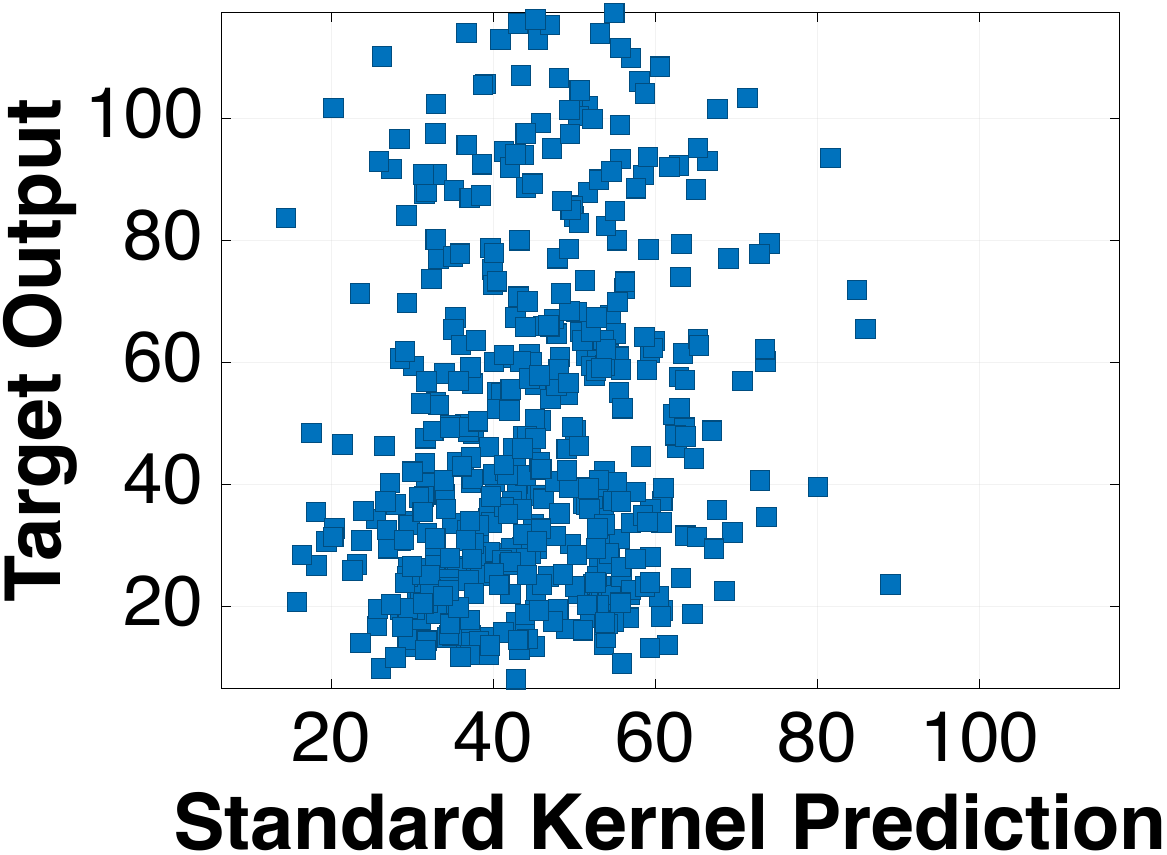}
     \includegraphics[width=.4\linewidth]{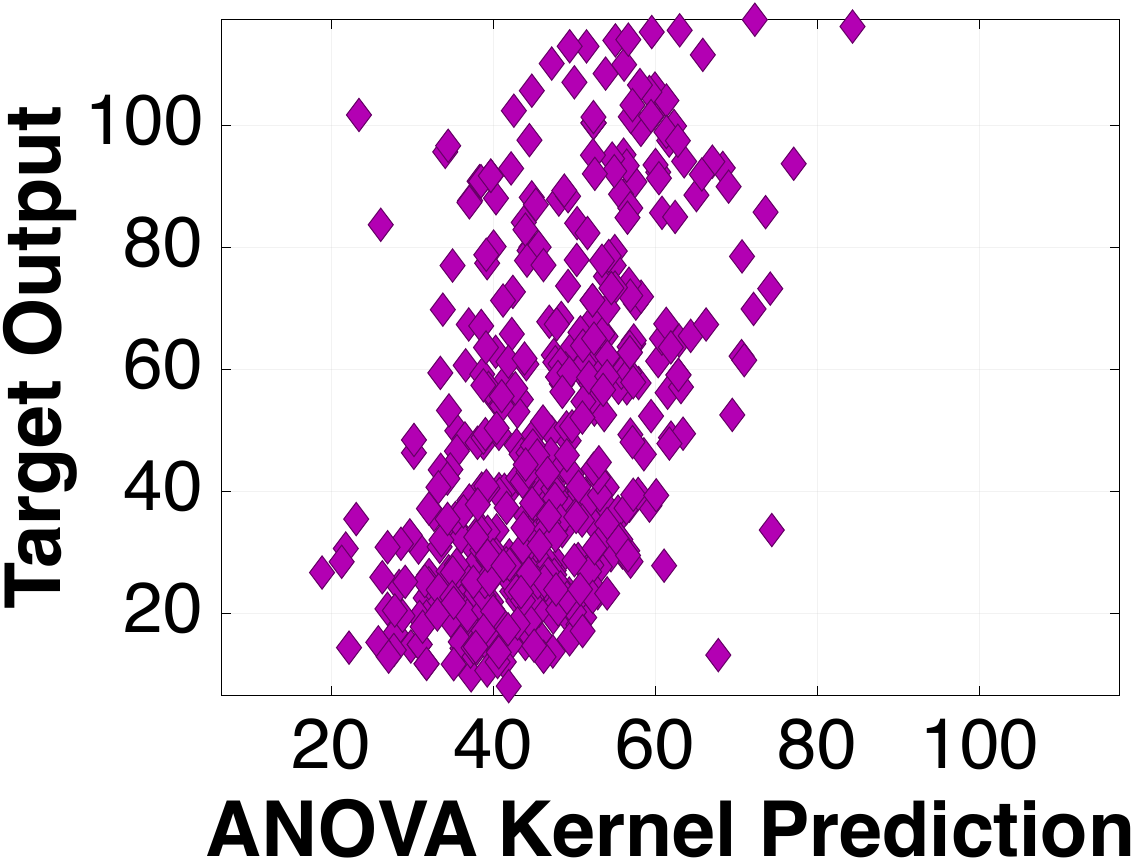}
      \includegraphics[width=.4\linewidth]{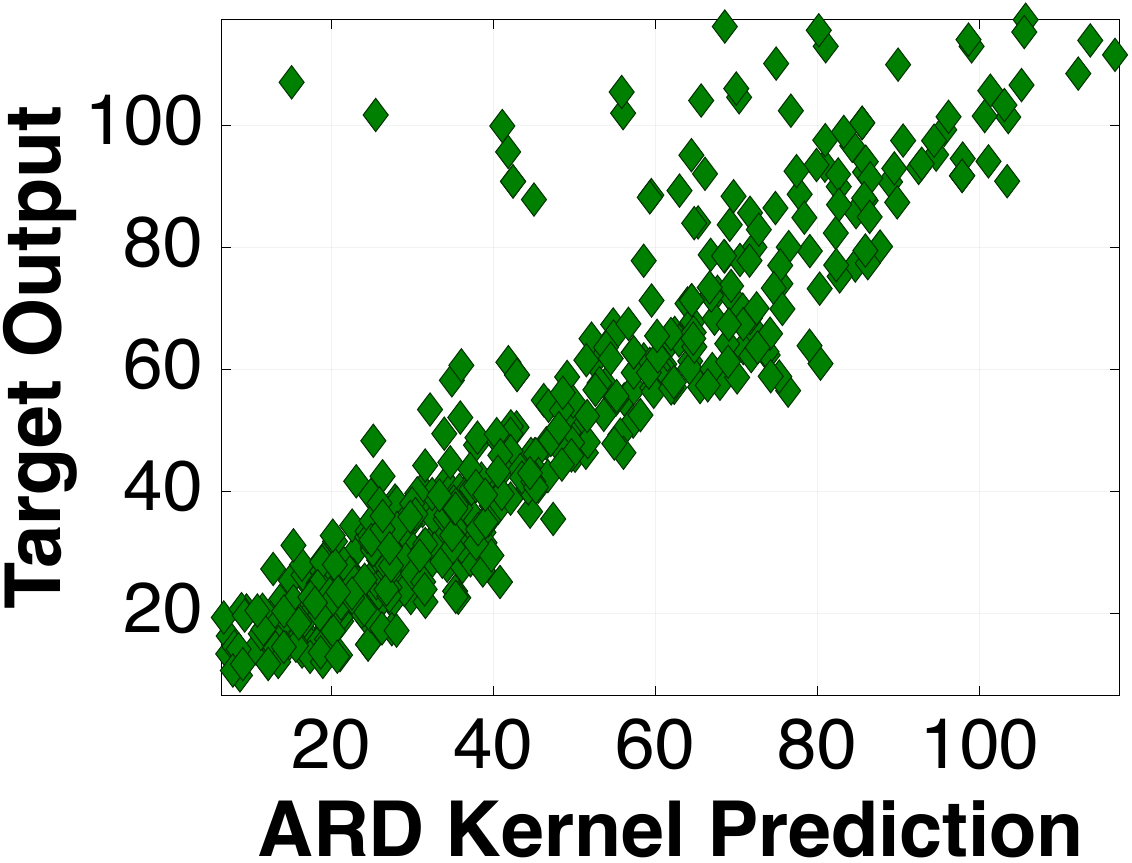}
    \includegraphics[width=.6\linewidth]{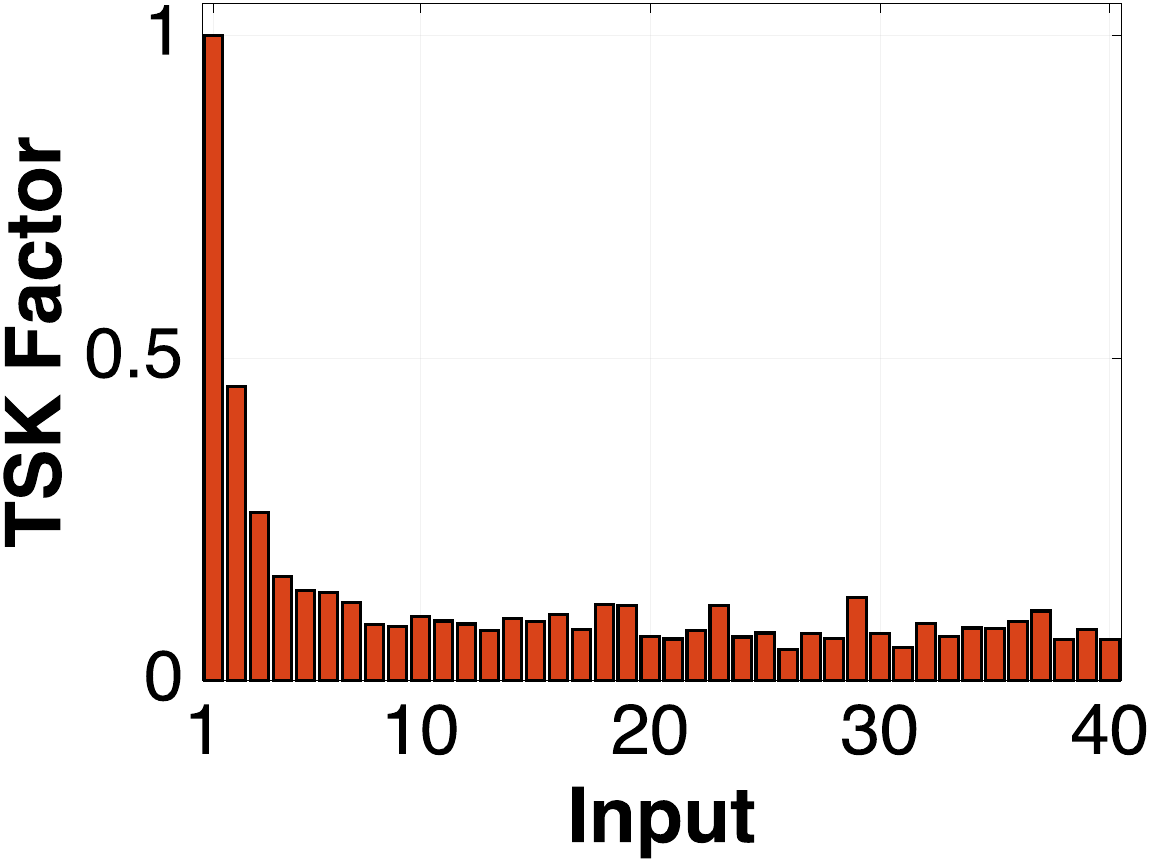}
      \caption{Results for approximation of the modified Grienwank function with  TSK, the standard product kernel, the unweighted ANOVA kernel, and ARD, trained on $M=10^4$ function evaluations. Top figures display kernel predictions versus true outputs on 500 out of the $10^4$ validation points. Strong correlation, and more accurate prediction, occurs when the scatter plot is closer to a line with slope one. The bottom figure presents the TSK factors taken after optimization.}
      \label{fig:grienwank}
  \end{figure}
The modified Grienwank function produces a different comparison between TSK and ARD than the preceding examples. The TSK approximation error is an order of magnitude smaller than the ARD error.\\


\textbf{Summary}\\
The three high-dimensional examples highlight the necessity of identifying multivariable structure and incorporating it into the approximation. The errors in~\cref{tab:summ_err} show that TSK factor optimization can improve performance over the standard kernel by at least two orders of magnitude. The standard product kernel performs poorly in each example, while the unweighted ANOVA kernel provides only modest improvement and performs nearly identically to the standard kernel for the two diffusion problems. Simply introducing an ANOVA decomposition is not sufficient when the contributions of different inputs and interactions are strongly unequal. Both TSK and ARD address this limitation through data-driven kernel adaptation and substantially improve approximation accuracy.

\begin{table}[h!]
    \centering
    \begin{tabular}{|cc|c|c|c|c|} 
    \hline
   Model & Error Type & TSK & Standard Ker. & ANOVA Ker. & ARD \\ \hline
    100D & RMSE & 0.480 & 108.492 & 71.667 & $\mathbf{0.479}$\\
     & RRSE & $3.255\times 10^{-3}$ & 0.736 & 0.486 & $\mathbf{3.251\times 10^{-3}}$ 
      \\ \hline
            1D Diff. & RMSE & ${2.027\times 10^{-3}}$ & ${0.139}$ & ${0.139}$ & $\mathbf{2.755\times 10^{-4}}$ \\
     & RRSE& $1.461\times 10^{-2}$ & 1.000 & 0.999 & $\mathbf{1.985\times 10^{-3}}$  \\ \hline
     Grienwank & RMSE & $\mathbf{1.1814}$ & $29.267$ & $24.643$ & $11.900$ \\
     & RRSE & $\mathbf{3.188\times 10^{-2}}$ & $0.515$ & $0.433$ & $0.209$  \\ \hline
    \end{tabular}
    \caption{Root mean squared error (RMSE) and root relative squared error (RRSE) for approximations of the 100D function, 1D diffusion problem, and modified Grienwank function using TSK, the standard product kernel, the unweighted ANOVA kernel, and the ARD kernel. Errors are taken on validation sets of $10^4$ points.}
    \label{tab:summ_err}
\end{table}

The comparison between TSK and ARD also illustrates that the two methods exploit multivariable structure differently. For the 100D function, which has unequal input importance but only main effects and pairwise interactions, TSK and ARD achieve nearly identical errors. For the diffusion problem, the learned TSK factors indicate that the quantity of interest depends much more strongly on a relatively small subset of the stochastic inputs, and ARD provides the most accurate approximation. In contrast, TSK performs substantially better than ARD for the modified Grienwank function, which combines strongly unequal main effects with interaction components of multiple orders. These results suggest that ARD length scale adaptation and adaptation of ANOVA-component weights provide complementary mechanisms for exploiting multivariable structure. No single adaptive mechanism performs best across all four examples, but TSK provides substantial improvements when its learned weighting of the approximation space aligns with the structure of the target function.

\section{Conclusion and outlook}
This work develops a framework for adapting a weighted RKHS to the multivariable structure of an unknown target function. TSKs provide a structured parameterization of ANOVA kernel weights, reducing $2^d$ possible component weights to $d$ input-specific TSK factors. Rather than prescribing these weights a priori or selecting a sparse collection of components, we learn the TSK factors by seeking the RKHS where the target function has minimum norm. Under suitable conditions, these TSK factors are unique to a target function. The numerical experiments demonstrate that the resulting adaptation improves approximation when the target possesses exploitable multivariable structure.
This perspective on adaptive multivariable approximation argues for adapting the geometry of the approximation space to the structure of the target function. 

Avenues for future work concern the scope and computational implementation of the method. 
The product nature of weights induced by TSK factors restricts the method to certain families of weights. While acceptable for many functions, this is unsuitable for functions with complicated interactions, especially functions that feature higher-order interaction terms but no low-order interaction terms. Quasi-Monte Carlo literature offers many examples of weight structures, including product-order-dependent weights~\cite{KuoSloan12qmc,kaarnioja2021kernel}. Future work should investigate whether TSK theory can extend to other weight structures.
Improved computational scalability for large datasets can be addressed by the use of random features~\cite{RahimiRecht07} instead of using kernels corresponding to the characteristic functions $\phi_k$ directly. The structural adaptability of TSKs makes them attractive for direct integration into established kernel-based machine learning methods like automatic relevance determination~\cite{ard25} and multiple kernel learning~\cite{gonen2011multiple}. While our current theoretical results focus on exact minimum-norm interpolation, extending these uniform error bounds and convergence guarantees to settings with noisy observational data is a crucial next step. In the current work, we establish the method in the approximation of functions with independently distributed inputs. An important future direction is to extend the framework to settings with dependent inputs.
Finally, given the recent exploration of sensitivity analysis beyond the classical ANOVA decomposition setting~\cite{DaV21,Lamboni24,weidensager26,larsen2026new}, it would be interesting to study how TSK factors relate to alternative notions of sensitivity, especially in kernel-based sensitivity indices.

\subsubsection*{Acknowledgments} This work is partially supported by the National Science Foundation (NSF) under grant number 2038118 (John E. Darges). This work was partially funded by the Deutsche Forschungsgemeinschaft (DFG, German Research Foundation) - project number 569580074 (Laura Weidensager).

\bibliographystyle{abbrvurl}
\bibliography{refs}

\end{document}